\newtheorem*{rep@theorem}{\rep@title}
\newcommand{\newreptheorem}[2]{%
\newenvironment{rep#1}[1]{%
 \def\rep@title{#2 \ref{##1}}%
 \begin{rep@theorem}}%
 {\end{rep@theorem}}}
\newtheorem{thm}{Theorem}
\newtheorem{lem}[thm]{Lemma}
\newtheorem{prop}[thm]{Proposition}
\newtheorem{cor}[thm]{Corollary}
\newtheorem{claim}{Claim}
\DeclareMathOperator*{\argmin}{arg\,min}
\providecommand{\customgenericname}{}
\newcommand{\newcustomtheorem}[2]{%
  \newenvironment{#1}[1]
  {%
   \renewcommand\customgenericname{#2}%
   \renewcommand\theinnercustomgeneric{##1}%
   \innercustomgeneric
  }
  {\endinnercustomgeneric}
}
\newtheorem{defn}[thm]{Definition}
\newcommand{\propref}[1]{Proposition~\ref{#1}}
\newcommand{\paren} [1] {\ensuremath{ \left( {#1} \right) }}
\newcommand{\curlybracket}[1]{\ensuremath{\left\{#1\right\}}}
\def\E{\mathbb{E}}
\def\R{\mathbb{R}}
\def\cH{\mathcal H}
\def\reals{\mathbb{R}}
\def\cD{\mathcal D}
\def\Ev{\mathbb{E}}
\def\cO{\mathcal O}
\def\cX{\mathcal X}
\def\cU{\mathcal U}
\def\cM{\mathcal M}
\title[Tolerant Robustness]{Robust Empirical Risk Minimization with Tolerance}
\begin{document}

\maketitle

\begin{abstract}


Developing simple, sample-efficient learning algorithms for robust classification is a pressing issue in today’s tech-dominated world, and current theoretical techniques requiring exponential sample complexity and complicated improper learning rules fall far from answering the need. In this work we study the fundamental paradigm of (robust) \textit{empirical risk minimization} (RERM), a simple process in which the learner outputs any hypothesis minimizing its training error. RERM famously fails to robustly learn VC classes \citep{Omar19}, a bound we show extends even to `nice' settings such as (bounded) halfspaces. As such, we study a recent relaxation of the robust model called \textit{tolerant} robust learning \citep{Urner22} where the output classifier is compared to the best achievable error over slightly larger perturbation sets. We show that under geometric niceness conditions, a natural tolerant variant of RERM is indeed sufficient for $\gamma$-tolerant robust learning VC classes over $\mathbb{R}^d$, and requires only $\tilde{O}\left( \frac{VC(H)d\log \frac{D}{\gamma\delta}}{\epsilon^2}\right)$ samples for robustness regions of (maximum) diameter $D$.
\end{abstract}

\begin{keywords}%
empirical risk minimization, robust learning, vc dimension, tolerant learning
\end{keywords}

\section{Introduction}

Adversarially robust classification is a staple of modern machine learning. In the robust setting, along with meeting standard accuracy guarantees, predictions made by a learner at test time must additionally be robust to adversarial perturbations to the input, typically defined by a fixed family $\mathcal{U}=\{U_x\}_{x \in X}$ of possible perturbations. Developing robust algorithms with provable guarantees has been an important research direction in recent years, both for parametric \cite{loh18, attias19, Srebro19, bartlett19, pathak20} and non-parametric \cite{WJC18, YRWC19, Bhattacharjee20, Bhattacharjee21} classifiers, but understanding the performance of even the most basic algorithms in the setting remains open.

In this work, we study one of the simplest, most fundamental algorithmic paradigms in learning, a classical method called \textit{empirical risk minimization} (ERM). In the robust setting, an algorithm is said to be an empirical risk minimizer (RERM) if it always outputs a hypothesis in the class with minimal \textit{robust} risk over its training data. In the standard setting, it is a classical result that any learnable class is learnable (near-optimally) by any ERM. Unfortunately, this is known to fail drastically in the robust setting---\citet{Omar19} showed that there exist finite VC classes, $\mathcal{H}$, where no algorithm outputting hypotheses in $\mathcal{H}$ (called a \textit{proper} learner) can converge towards the optimal classifier, even with arbitrary amounts of training data. Conversely, such classes \textit{are} in fact robustly learnable, but require complicated improper learning rules and a potentially exponential number of samples.

The failure of Robust ERM for general classes raises an interesting question: \textit{are there natural sufficient conditions for the success of RERM?} One obvious answer to this question is the notion of robust VC dimension, a combinatorial parameter promising the success of RERM. However, bounding robust VC is typically difficult, and such results are only known for very specialized examples of classifiers and robustness regions (e.g.\ linear classifiers under fixed-radius balls \citep{Cullina18} and other simple margin structures \citep{pathak20}, or VC-classes under finite perturbation sets \citep{attias19}). To our knowledge there are no corresponding results for more general robustness regions and hypothesis classes beyond these special cases.


Given the current failure of combinatorial techniques in this setting, one might instead hope to show RERM works given sufficiently nice \textit{geometric} conditions on the hypothesis class. Sadly, this is not the case. We show that there exist robustness regions for which RERM (indeed any proper algorithm) fails even for settings as simple as (bounded) linear classifiers.
\begin{thm}[Failure of RERM for Linear Classifiers]\label{thm:intro1}
For any $W>0$ and $d>1$, let $\cH_W$ denote the set of linear classifiers with distance at most $W$ from the origin. Then there exists a set of robustness regions $U$ over $\R^d$ such that for any proper learning algorithm $L$ there exists a distribution $\cD$ for which the following hold:
\begin{itemize}
	\item \textbf{$\cD$ is realizable}: There exists $h^* \in \cH_W$ such that $\ell_U(h^*, \cD) = 0$.
	\item \textbf{$L$ has high error}: With probability at least $\frac{1}{7}$ over $S \sim \cD^m$, $\ell_U(L(S), \cD) > \frac{1}{8}$. 
\end{itemize}
\end{thm}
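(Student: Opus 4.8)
The plan is to follow the standard template for ``proper learning is impossible'' lower bounds: fix the (proper) learner $L$ and its sample size $m$, build a \emph{family} of realizable instances that $L$ cannot all handle, and extract one bad instance by averaging. Concretely, for a parameter $N \gg m$ to be chosen afterwards, I would construct a single fixed robustness assignment $U$ together with a family $\{\cD_\sigma\}_\sigma$ of distributions indexed by a hidden parameter $\sigma$, each realizable by some $h^*_\sigma \in \cH_W$, and show that if $\sigma$ is drawn uniformly then $\E_{\sigma,\,S \sim \cD_\sigma^m}[\ell_U(L(S),\cD_\sigma)] \ge \tfrac14$. The reverse Markov inequality ($\Pr[X > t] \ge (\E X - t)/(1-t)$ for $X \in [0,1]$) applied with $t = \tfrac18$ then gives $\Pr_{\sigma, S}[\ell_U(L(S),\cD_\sigma) > \tfrac18] \ge \tfrac17$, and averaging over $\sigma$ fixes a single $\sigma$ — hence a single $\cD = \cD_\sigma$, depending on $L$ and $m$ — that witnesses the theorem.

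For the geometry I would place the whole construction inside a fixed $2$-plane through a ball $B(0,r)$ with $r \ll W$, so that every hyperplane meeting the configuration automatically lies in $\cH_W$ (the other $d-2$ coordinates are inert, which is where $d>1$ is used; the $1$-dimensional case is genuinely degenerate and is excluded). In this region I would place $N$ ``test points'' $z_1,\dots,z_N$, each carrying $\Theta(1/N)$ of the mass (with the remaining mass on an easy reference point used only for bookkeeping), and assign each a \emph{thick}, non-ball robustness region $U_{z_j}$, arranged so that: (a) for every $\sigma$ there is a halfspace $h^*_\sigma \in \cH_W$ that is \emph{constant}, with the label prescribed by $\sigma$, on every $U_{z_j}$, so $\cD_\sigma$ is realizable; (b) the admissible boundaries that are constant on \emph{all} of the $U_{z_j}$ form only a ``thin corridor'' whose position encodes $\sigma$; and (c) any halfspace whose boundary falls outside the corridor either ``cuts through'' $\Omega(N)$ of the regions $U_{z_j}$ or is constant on them with the wrong sign — so it is robust-wrong on $\Omega(N)$ test points. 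Moreover the labels seen in the sample should reveal the corridor only up to a resolution that still leaves $\Omega(N)$ unseen test points on the ``wrong side'' of whatever boundary $L$ commits to.

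Given such a $U$ the counting step is routine: for every $\sigma$ the sample $S$ meets at most $m \ll N$ test points, so $\Omega(N)$ are unseen; because $L$ is proper it outputs a single halfspace $L(S)$, and by (b)--(c), averaged over the hidden $\sigma$, $L(S)$ is robust-wrong on a constant fraction of the unseen test points except on a $\sigma$-event of probability bounded away from $1$. Multiplying by the $\Theta(1/N)$ mass per point yields $\E[\ell_U(L(S),\cD_\sigma)] \ge \tfrac14$ for a suitable choice of constants and $N$, which is what the minimax step consumes.

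The real obstacle — and essentially all the content of the theorem — is building $U$ so that (a)--(c) hold at once: realizability demands a single halfspace in $\cH_W$ that is robustly correct on all $N$ regions simultaneously, whereas (c) demands that essentially every \emph{other} halfspace be robustly wrong on $\Omega(N)$ of them, and reconciling these forces the robustness regions to be genuinely adversarial (for Euclidean balls linear classifiers have bounded ``robust VC'' and this cannot occur, cf.\ \citep{Cullina18}); one is really exploiting the gap between the complexity of $\cH_W$ itself and that of its robust-loss class. A related point to handle carefully is that the hidden $\sigma$ must be unrecoverable from the labels of an $m$-sample — otherwise $L$ would locate the corridor and win — so the ``uncertainty'' has to be carried by the robust geometry around the unseen test points rather than by their labels, which few samples would pin down. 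The remaining ingredients (the minimax reduction, the scaling into $B(0,r)$, and the sample-counting) are all standard.
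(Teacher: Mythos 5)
Your outer shell --- fix $L$ and $m$, average over a hidden parameter, lower bound the expected robust loss by $\tfrac14$, and convert to the $(\tfrac17,\tfrac18)$ statement by reverse Markov --- is exactly the paper's (and the standard) reduction, and your diagnosis that ``essentially all the content of the theorem'' lies in constructing $U$ is also correct. The problem is that you stop there: conditions (a)--(c) are a wish list rather than a construction, and the one concrete geometric suggestion you do make points away from the construction that works. The paper does \emph{not} shrink the configuration into a small ball near the origin so that the bound $W$ becomes irrelevant; it uses $W$ essentially, placing all test points on a sphere of radius $W(1+\beta)$ slightly \emph{larger} than $W$ (Lemma \ref{lem:finding_shatter}). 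Since every $h\in\cH_W$ has its boundary within distance $W$ of the origin, its positive side must intersect that sphere, so \emph{every} hypothesis in the class is forced to output $+1$ somewhere on the configuration. This is what allows the paper to give \emph{all} points the label $-1$: realizability is witnessed by a tangent halfspace whose positive cap is confined to a single small cell, the hidden parameter is which $2m$ of $3m$ points carry mass (not the labels), and the sample therefore genuinely reveals nothing. With your scaling into $B(0,r)$, $r\ll W$, there exist halfspaces in $\cH_W$ that are constant on the entire configuration and cut no $U_{z_j}$; to make those fail you would need mixed labels, which reintroduces exactly the information-leakage problem you flag in your last paragraph and never resolve.

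The second missing ingredient is the combinatorial gadget converting ``every $h$ errs somewhere'' into ``every $h$ errs on a constant fraction of the mass.'' The paper partitions the sphere into $M=\binom{3m}{m}$ cells $Z_T$ indexed by $m$-subsets $T\subseteq\{1,\dots,3m\}$ and sets $U_{x_i}=\bigcup_{T\ni i}Z_T$; then $h$ being positive on any single cell $Z_T$ makes it non-robust at all $m$ points $x_i$ with $i\in T$ simultaneously, while the tangent classifier $h_T$ is robust at every $x_i$ with $i\notin T$. This is precisely the step where the regions become ``genuinely adversarial,'' and it is the step your proposal leaves open. As written, the proposal is a correct reduction of the theorem to an unconstructed object, so it does not yet constitute a proof.
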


With this in mind, we turn our attention to a different approach: relaxing the notion of robustness itself. We'll consider a recent model of \citet{Urner22} called \textit{tolerant} robust learning. In the tolerant setting, the learner is only required to compete with the best loss over a relaxed family of perturbation sets $\mathcal{U}^\gamma$ for a (potentially arbitrary) tolerance parameter $\gamma >0$. \citet{Urner22} studied this setting in the special case of radius $r$ balls, where the learner competes with robust error against $r(1 + \gamma)$-balls. Under this framework, \citet{Urner22} give an algorithm with PAC-guarantees for VC classes using significantly fewer samples, but their techniques remain improper and only hold for the simplest robustness setting.

In this work, we show that a simple variant of RERM in the tolerant model indeed succeeds under natural geometric conditions on the hypothesis class. In particular, we study a notion of smoothness called \textit{regularity}, which roughly promises that every point in the instance space should be contained in some ball of the same label. This captures many well-studied settings, such as cases where the decision boundaries are compact, differential manifolds in $\mathbb{R}^d$.
\begin{thm}[Tolerant RERM for Regular Classes]\label{thm:upper_bound}
Let $\cH$ be a regular hypothesis class with VC dimension $v$ over $\mathbb{R}^d$, and let $\mathcal{U}$ be any set of robustness regions. Then $TolRERM$ tolerantly PAC-learns $(\cH, \mathcal{U})$ with tolerant sample complexity 
\[
m(\epsilon, \delta, \gamma)  = O\left( \frac{vd\log \frac{d\text{Diam}(U)}{\epsilon\gamma\delta}}{\epsilon^2}\right),
\]
where $\text{Diam}(U)$ denotes the maximum $\ell_2$ diameter across robustness regions $U_x$. 
\end{thm}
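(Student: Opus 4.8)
The plan is to present $TolRERM$ as an ordinary agnostic ERM for a low-complexity surrogate loss, and to use \emph{regularity} to sandwich the true robust loss between that surrogate and its $\gamma$-inflation. Recall that $TolRERM$ minimizes the empirical robust loss against a $\gamma$-scale discretization $N_x\subseteq U_x^\gamma$ of each robustness region; for concreteness take $N_x$ to be a $\Theta(\gamma\,\text{Diam}(U_x))$-net of $U_x$ together with, around each net point, a spherical collar of points of $U_x^\gamma$ at radius $\rho^\star=\Theta(\gamma\,\text{Diam}(U_x))$ (the scale being shrunk to match $\cH$'s regularity if necessary). Writing $\ell_\gamma(h,x,y):=\ind[\exists z\in N_x:\ h(z)\ne y]$, we have $TolRERM(S)\in\argmin_{h\in\cH}\sum_{(x_i,y_i)\in S}\ell_\gamma(h,x_i,y_i)$, and it suffices to establish: (i) $\ell_U(h,x,y)\le\ell_\gamma(h,x,y)\le\ell_{U^\gamma}(h,x,y)$ pointwise for every $h\in\cH$; and (ii) the surrogate class $\cG=\{(x,y)\mapsto\ell_\gamma(h,x,y):h\in\cH\}$ enjoys uniform convergence at the claimed rate.

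\emph{Easy inequality and uniform convergence.} Because $N_x\subseteq U_x^\gamma$, the right half of (i) is immediate; fixing $h^*\in\cH$ with $\ell_{U^\gamma}(h^*,\cD)\le\mathrm{OPT}_\gamma+\tfrac\epsilon4$, a Chernoff bound for the single function $h^*$ then shows its empirical $\ell_\gamma$-loss — hence that of $TolRERM(S)$ — is at most $\mathrm{OPT}_\gamma+\tfrac\epsilon2$ once $m=\Omega(\epsilon^{-2}\log\tfrac1\delta)$. For (ii), observe that $\ell_\gamma(h,x,y)$ depends on $h$ only through its labels on the $\le(C/\gamma)^d=:k$ points of $N_x$; hence on any $m$ instances $\cG$ realizes at most as many patterns as $\cH$ realizes on the $\le mk$ associated points, which is $\le(mk)^v$ by Sauer--Shelah. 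A growth function bounded by $(mk)^v$ forces $\mathrm{VCdim}(\cG)=O(v\log(vk))=O(vd\log\tfrac1\gamma)\le O\!\big(vd\log\tfrac{d\,\text{Diam}(U)}{\epsilon\gamma}\big)$, and the standard agnostic uniform-convergence bound gives $\sup_{h\in\cH}|\widehat{\ell_\gamma}(h,S)-\ell_\gamma(h,\cD)|\le\tfrac\epsilon4$ with probability $1-\tfrac\delta2$ for the sample size in the statement.

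\emph{The crux: $\ell_U\le\ell_\gamma$ from regularity.} Suppose $h\in\cH$ mislabels some $z_0\in U_x$, say $h(z_0)=\bar y\ne y$; we must produce a point of $N_x$ that $h$ also mislabels. By regularity, $z_0$ lies in a ball $B=B(c,\rho)\subseteq\{h=\bar y\}$, whose radius $\rho$ is bounded below only by $\cH$'s regularity scale and which may touch $U_x$ arbitrarily thinly, so a bare net of $U_x$ need not meet $B$. The collar is engineered precisely for this: if $z\in N_x$ is a net point of $U_x$ within the net scale $s$ of $z_0$, then $\|z-c\|<\rho+s$, so as long as $s\le\rho^\star\le\rho$ one gets $\big|\,\|z-c\|-\rho^\star\,\big|<\rho$, i.e.\ the sphere $\partial B(z,\rho^\star)$ meets the \emph{open} ball $B$; a collar of resolution $\le\rho^\star-s$ on that sphere then contains a point strictly inside $B$, and that point is mislabeled by $h$, so $\ell_\gamma(h,x,y)=1$. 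The technical content is choosing $\rho^\star$, $s\le\rho^\star/2$, and the collar resolution so that simultaneously the collar stays in $U_x^\gamma$, the chain $s\le\rho^\star\le\rho$ holds for \emph{every} regularity ball (this is exactly where both ``regular'' and ``tolerant'' enter), and the count remains $(C/\gamma)^d$; the degenerate cases $z_0\in\partial B$ and $z=c$ are dispatched by a limiting argument.

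\emph{Wrap-up.} On the good event, $\ell_U(TolRERM(S),\cD)\le\ell_\gamma(TolRERM(S),\cD)\le\widehat{\ell_\gamma}(TolRERM(S),S)+\tfrac\epsilon4\le\widehat{\ell_\gamma}(h^*,S)+\tfrac\epsilon4\le\ell_\gamma(h^*,\cD)+\tfrac\epsilon2\le\ell_{U^\gamma}(h^*,\cD)+\tfrac\epsilon2\le\mathrm{OPT}_\gamma+\epsilon$, using in turn the crux, uniform convergence, ERM-optimality, uniform convergence again, the easy inequality, and the choice of $h^*$; rescaling $\epsilon$ and $\delta$ finishes. All the difficulty sits in the crux — the geometric lemma that a $\gamma$-scale net-with-collar is an honest robustness certificate for regular hypotheses — with the remainder a single-function Chernoff bound and a textbook Sauer--Shelah/ERM argument; this lemma is exactly what fails for the unrestricted classes and regions of \thmref{thm:intro1}.
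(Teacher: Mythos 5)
Your argument has the right shape in most places, but it silently analyzes a different algorithm from the one the theorem is about, and in doing so it skips the step that is actually the heart of the paper's proof. $TolRERM$ as defined does \emph{not} minimize an empirical loss over a hand-built finite discretization $N_x$; it draws a radius $r$ uniformly from $[\tfrac{\epsilon\delta\gamma}{7},\gamma]$ and calls the tolerant RERM oracle on the \emph{continuous} inflated regions $U^r$, returning $\hat h=\argmin_h \ell_{U^r}(h,S)$. Your clean chain $\ell_U(\hat h,\cD)\le\ell_\gamma(\hat h,\cD)\le\widehat{\ell_\gamma}(\hat h,S)+\tfrac{\epsilon}{4}\le\widehat{\ell_\gamma}(h^*,S)+\dots$ uses ERM-optimality of $\hat h$ \emph{with respect to the surrogate $\ell_\gamma$}, which the oracle's output does not satisfy. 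For the actual algorithm, the discretization $V^r$ can only be sandwiched as $\ell_{U^{r-\alpha}}\le\ell_{V^r}\le\ell_{U^r}$, so after uniform convergence one is left needing to compare $\min_h\ell_{U^r}(h,S)$ with $\min_h\ell_{U^{r-\alpha}}(h,S)$ — two different objectives. The paper bridges this with its Lemma \ref{lem:r_works}: since $r\mapsto OPT_S^r$ is monotone and bounded, a uniformly random $r$ lands, with probability $1-\delta/2$, at a scale where $OPT_S^r\le OPT_S^{r-\alpha}+\epsilon/3$. Your proof has no analogue of this step and cannot avoid it without changing the algorithm (your discretized ERM also requires explicit geometric access to $U_x$ and knowledge of the regularity parameter, which is a different access model than the RERM oracle the theorem assumes).

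The remaining components are essentially the paper's, and are sound. Your ``crux'' is the paper's Lemma \ref{lem:v_construct}: a regularity ball of the error point must capture a point of a sufficiently fine cover. Note, though, that your net-of-$U_x$-plus-spherical-collar construction is more elaborate than needed and is fragile exactly where you admit (``may touch $U_x$ arbitrarily thinly''); the paper instead covers the \emph{inflated} region $U_x^r$ at scale $\alpha/2$ and only asks to detect errors in $U_x^{r-\alpha}$, so the regularity ball is automatically contained in the covered set and the collar machinery disappears. Your Sauer--Shelah bound $\mathrm{VCdim}(\cG)=O(v\log(vk))$ is exactly the paper's Proposition \ref{prop:finite-RVC}. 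So the fix is either to prove the theorem as stated by adding the random-radius smoothing argument, or to state clearly that you are proving the guarantee for a different (discretized-ERM) proper learner.
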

Theorem \ref{thm:upper_bound} matches the sample complexity given in \citet{Urner22} up to logarithmic factors and enjoys the additional benefits of applying to more general robustness regions along with its properness and general algorithmic simplicity. For completeness, we also analyze our algorithm's performance over non-regular classifiers in Appendix \ref{sec:proof_extension}, and show that it has a similar performance albeit at the cost of replacing the VC-dimension with $v_{\text{ball}}$, the robust VC dimension of $\cH$ over balls of a fixed radius. Thus, for non-regular hypothesis classes, our algorithm gives a reduction from arbitrary robustness regions to the case where they are all balls of a fixed radius.

Finally it's worth noting that while \citet{Urner22} only requires sampling access to the perturbation sets, stronger access such as an empirical risk minimizer is inevitable in the general setting where $\mathcal{U}$ is unknown. We show that there exists hypothesis classes where $\Omega((\frac{D}{\gamma})^d)$ queries to a sampling oracle are required for robust learning with tolerance if no other interaction with $U_x$ is permitted.

While Theorem \ref{thm:upper_bound} gives a natural sufficient condition for the success of RERM in relaxed settings, many questions in this direction remain wide open. It would be interesting to identify a necessary condition for the success of RERM, both in the tolerant and original robust models. Furthermore, it should be noted that while we prove RERM fails to learn nice classes in the latter, the perturbation family we use to achieve this is highly combinatorial. As such, there is still hope that RERM may be sufficient in the traditional setting under \textit{joint} niceness conditions on $\mathcal{H}$ and $\mathcal{U}$, though the close interplay between the two families seems to make identifying such a condition difficult, if it is indeed possible at all.
\section{Related Work}

Much of the work on adversarial robustness \citep{Carlini17, Liu17, Papernot17, Papernot16, Szegedy14, Hein17, Katz17, Wu16,Steinhardt18, Sinha18} is done in the context of neural networks.

On the theoretical side, there has been a recent focus on developing algorithms with guarantees in convergence towards an optimal classifier. On the parametric side, several works \citep{loh18, attias19, Srebro19, bartlett19, pathak20, Cullina18} have focused on distribution agnostic bounds on the amount of data required to converge towards the optimal classifier in a given hypothesis class. For example, \citet{Srebro19} showed through an example that the VC dimension of robust learning may be much larger than standard or accurate learning indicating that the sample complexity bounds may be higher. There has also been some work considering the computation complexity required for robust learning such as \citet{Kane20}.

Aside from \citet{Urner22}, there are several works which also consider variations on robust learning with tolerance. \citet{YRWC19} and \citet{Bhattacharjee20} show that certain non-parametric algorithms exhibit a type of tolerant behavior when robustness regions are constrained to be balls of radius $r$. \citet{Omar22} considers robustness in the \textit{transductive learning setting}. Their work employs a similar idea to \citet{Urner22} in that they consider expanded perturbation sets when giving their formal guarantees. However, their expansions are not based on tolerance $\gamma > 0$.

Finally, \cite{Awasthi21}, introduces a notion of pseudo-robustness which precisely matches our definition of a regular classifier (Definition \ref{defn:regular}). Their work focuses on using this notion to define a robust analog to the Bayes optimal classifier. By contrast, our work focuses on learning a robust hypothesis class that satisfies this condition.

\section{Preliminaries}
Let $\cH$ be a family of binary classifiers $\{h: \mathbb{R}^d \to \{\pm 1\}\}$, and $U = \{U_x \subseteq \R^d: x \in \reals^d\}$ any set of robustness regions. 
We define the robust loss function with respect to $U$ as follows.
\begin{defn}
Let $h \in \cH$ be a classifier and $(x,y) \in \reals^d \times \{\pm 1\}$ be a labeled point. Then the \textbf{robust loss} of $h$ over $(x, y)$, denoted $\ell_U(h, (x,y))$, is defined as 
\begin{align*}
  \ell_{U}(h, (x,y)) = \begin{cases} 1 & \exists x' \in U_x\text{ such that }h(x') \neq y \\0 & \text{otherwise.} \end{cases}.  
\end{align*}
That is, $h$ achieves a loss of $0$ only if it labels all points in $U_x$ as $y$. 
\end{defn}

For a distribution, $\cD$ over $\reals^d \times \{\pm 1\}$, we let $ \ell_U(h, \cD)$ denote the expected loss $h$ pays over a labeled point drawn from $\cD$. That is, $\ell_U(h, \cD) = \Ev_{(x,y) \sim \cD}[\ell_U(h, (x,y))]$. 

Similarly, for a set of $n$ labeled points, $S$, we let $\ell_U(h, S)$ denote the average robust loss $h$ pays over $S$. that is, $\ell_U(h, S) = \frac{1}{n} \sum_{i=1}^n \ell_U(h, (x_i, y_i))$. 

We will also use $||x - x'||$ to denote the $\ell_2$ distance between $x$ and $x'$, and $B(x, r)$ to denote the (closed) $\ell_2$ ball centered at $x$ with radius $r$.

\subsection{Robust PAC-learning}

We now review a natural generalization of PAC learning to the robust setting called robust PAC-learning \citep{Omar19}. 

\begin{defn}\label{defn:rob_pac}
Let $\cH$ be a hypothesis class and $U$ be a set of robustness regions. A learner $L$ \textbf{robustly PAC-learns} $(\cH, U)$ if for every  $\epsilon, \delta > 0$, there exists $m(\epsilon, \delta)$ such that for all $n \geq m(\epsilon, \delta)$, for all data distributions, $\cD$, with probability $1-\delta$ over $S \sim \cD^n$, $$\ell_{U}(\hat{h}, \cD) \leq \min_{h \in \cH} \ell_{U}(h, \cD) + \epsilon,$$ where $\hat{h} = L(S)$ denotes the classifier in $\cH$ outputted by $L$ from training sample $S$. $m(\epsilon, \delta)$ is said to be the \textbf{sample complexity} of $L$ with respect to $(\cH, U)$. 
\end{defn}

Algorithms that are able to robustly PAC-learn a pair $(\cH, U)$ are the natural robust analogs of standard learning algorithms, and thus an important question is understanding how the sample complexities, $m(\epsilon, \delta)$, for doing so are bounded.

\section{Robust Empirical Risk Minimization on Linear Classifiers}

\looseness-1\citet{Omar19} showed that there exist hypothesis classes $\cH$ with bounded VC dimension, and robustness regions $U$, such that proper robust PAC-learning is not possible, meaning no matter how much data one is allowed, there always exists a distribution where the learner will suffer high robust loss.

However, for many practical examples, this does not appear to be the case -- for example, \cite{Cullina18} showed that when $\cH$ is the set of all linear classifiers and $U$ is the set of robustness regions with $U_x = B(x, r)$, the sample complexity of robustly learning with RERM is at most $m(\epsilon, \delta) = \tilde{O}\left(\frac{d}{\epsilon^2}\right)$, matching the standard complexity for linear classification. 

Motivated by recent interest in more general robustness regions than balls of a fixed radius, we consider the case where $\cH$ is a natural hypothesis class, but $U$ is a potentially arbitrary robustness region. That is, we ask the following question: are there examples of natural hypothesis classes for which there exist robustness regions leading to arbitrary high sample complexities?

Unfortunately, the answer turns out to be yes. To show this, we begin by defining the natural hypothesis class of \textit{bounded} linear classifiers.
\begin{defn}\label{defn:bounded_linear}
A $W$-bounded linear classifier, $f: \R^d \to \R^d$, is a linear classifier $h$ whose decision boundary has distance at most $W$ from the origin. That is, there exist $w \in \R^d$ and , $b\in \R$ with $\frac{|b|}{||w||} \leq W$ such that $$h(x) = \begin{cases}1 & \langle w, x \rangle + b \geq 0 \\ -1 & otherwise \end{cases}.$$ We let $\cH_W$ denote the class of all $W$-bounded linear classifiers
\end{defn}
The boundedness condition, $W$, can be thought of as a regularization term which is common during any kind of practical optimization.

We now show that there exist robustness regions, $U$, for which $(\cH_W, U)$ is not robustly PAC-learnable, even in the realizable setting. For convenience, we restate Theorem \ref{thm:lower_bound} from the introduction.

\begin{theorem}\label{thm:lower_bound}
For any $W>0$, $d>1$, and $m>1$, there exists a set of robustness regions $U$ over $\R^d$ such that for any learning algorithm $L$ there exists a distribution $\cD$ for which the following hold:
\begin{itemize}
	\item \textbf{$\cD$ is realizable}: There exists $h^* \in \cH_W$ such that $\ell_U(h^*, \cD) = 0$.
	\item \textbf{$L$ has high error}: With probability at least $\frac{1}{7}$ over $S \sim \cD^m$, $\ell_U(L(S), \cD) > \frac{1}{8}$. 
\end{itemize}
\end{theorem}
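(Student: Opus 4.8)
The plan is to establish this as an impossibility result for \emph{proper} robust learning (recall that by \defref{defn:rob_pac} the output $L(S)$ must lie in $\cH_W$), in the spirit of the no-proper-robust-learning phenomenon of \citet{Omar19}, with all of the adversarial power concentrated in the freedom to choose a non-ball perturbation family $U$. First I would reduce to the plane: a hard instance in $\R^2$ lifts to $\R^d$ for any $d>1$ by embedding in the first two coordinates and letting the perturbation regions be unconstrained in the remaining ones, and $\cH_W$ restricted to that plane contains every $W$-bounded line there. Inside $\R^2$ I would exploit the rich one-parameter subfamily of $\cH_W$ given by the lines tangent to the circle of radius $W$ about the origin, $\{h_\theta\}_{\theta\in[0,2\pi)}$ (say $h_\theta$ positive on the side of the origin); this supplies, for every target sample size $m$, a large supply of genuinely $W$-bounded classifiers with which to fool the learner.

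The heart of the proof is the construction of $U$. Fixing $m$, I would place $k=k(m)\ge 2m$ labeled \emph{test points} $x_1,\dots,x_k$ in $\R^2$ and engineer their perturbation regions $U_{x_i}$ (all other points get the trivial region $U_x=\{x\}$, so the learner, which knows $U$, sees exactly which points are test points but not which ones the adversary's distribution is supported on) with three properties. (i) \emph{Realizability}: for each labeling $\sigma$ in a large ``hidden-labeling'' set $\Sigma\subseteq\{\pm1\}^k$, the uniform distribution $\cD_\sigma$ on $\{(x_i,\sigma_i)\}_{i=1}^k$ satisfies $\ell_U(h_{\theta(\sigma)},\cD_\sigma)=0$ for a suitable tangent line $h_{\theta(\sigma)}$. (ii) \emph{Information hiding}: the test-point locations and the regions $U_{x_i}$ do not depend on $\sigma$, and $\Sigma$ (with the uniform prior) is ``unpredictable'', so that observing the labels of any $m$ test points leaves each unobserved label essentially uniform. (iii) \emph{Commitment}: no $g\in\cH_W$ can hedge on a test point --- for each $x_i$, the image $g(U_{x_i})$ is either monochromatic (in which case $g$ robustly errs on $(x_i,y)$ for exactly one of $y=\pm1$) or is not (in which case $g$ robustly errs for both). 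Property (iii) is automatic, but the real difficulty --- and the main obstacle of the whole argument --- is making (i) and (ii) coexist: since $\cH_W$ has plain VC dimension only $d+1$, it cannot robustly shatter more than $d+1$ single points, and by \citet{Cullina18} fixed-radius balls do not help either, so $\Sigma$ cannot be all of $\{\pm1\}^k$ and the regions $U_{x_i}$ must be genuinely rich (in particular unbounded); the geometry that simultaneously makes $\Sigma$ a large, realizable, \emph{and} unpredictable family is exactly the technical core.

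Given such a $U$, the conclusion follows by a standard randomized-adversary argument that also fixes the constants. Draw $\sigma$ uniformly from $\Sigma$ and then $S\sim\cD_\sigma^m$. The sample hits at most $m$ of the $k\ge 2m$ test points, so at least $k-m\ge k/2$ of them are unhit; for an unhit $x_i$, property (ii) makes $\sigma_i$ essentially uniform given $L$'s view, and property (iii) gives $\E_{\sigma_i}[\ell_U(L(S),(x_i,\sigma_i))]\ge\tfrac12$. Hence $\E_{\sigma,S}[\ell_U(L(S),\cD_\sigma)]\ge\tfrac{k-m}{2k}\ge\tfrac14$, so some fixed $\sigma^\star\in\Sigma$ has $\E_S[\ell_U(L(S),\cD_{\sigma^\star})]\ge\tfrac14$, and since the robust loss lies in $[0,1]$, Markov's inequality yields $\Pr_S[\ell_U(L(S),\cD_{\sigma^\star})>\tfrac18]\ge\frac{1/4-1/8}{1-1/8}=\tfrac17$. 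Taking $\cD=\cD_{\sigma^\star}$ --- realizable by (i) --- proves the theorem for this $m$, hence for all $m$; and since RERM is a proper learner, the introduction's \thmref{thm:intro1} is the special case $L=\mathrm{RERM}$.
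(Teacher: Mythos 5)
There is a genuine gap: you have correctly assembled the outer shell of the argument --- the randomized adversary, the $k\ge 2m$ test points of which the sample can hit at most half, the expected-loss bound of $\tfrac14$, and the Markov step giving $\Pr[\ell_U > \tfrac18]\ge\tfrac17$ (these constants match the paper exactly) --- but the construction of $U$, which you yourself flag as ``exactly the technical core,'' is left entirely unspecified. Worse, the framework you commit to (a fixed support of $k$ test points with a large, realizable, \emph{unpredictable} family $\Sigma$ of hidden labelings) runs head-on into the obstruction you identify: $\cH_W$ has VC dimension $O(d)$, so no such label family exists in the form you need, and you offer no geometry that escapes this. As written, the proof does not go through.

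The paper resolves this by hiding the \emph{support} rather than the \emph{labels}: every point is labeled $-1$, and the distribution $\cD_T$ is uniform over $2m$ of $3m$ fixed points. Lemma \ref{lem:finding_shatter} partitions a sphere of radius slightly larger than $W$ into $M=\binom{3m}{m}$ cells $Z_T$ (indexed by $m$-subsets $T$ of $\{1,\dots,3m\}$) such that (a) every $h\in\cH_W$ is positive somewhere on the sphere, hence on some $Z_T$, and (b) for each $T$ there is an $h_T\in\cH_W$ negative on every other cell. Setting $U_{x_i}=\cup_{T: i\in T}Z_T$ then forces \emph{every} classifier to be non-robust on at least the $m$ points indexed by some $T$, while $h_T$ witnesses realizability of $\cD_T$ because its failures are confined to exactly the $m$ points that $\cD_T$ omits. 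This sidesteps any need to robustly shatter labelings; the adversarial power lives entirely in the combinatorial overlap of the regions $U_{x_i}$, not in a rich labeling family. Your outer argument could be salvaged by replacing your conditions (i)--(iii) with this support-hiding construction, but without it the proposal is a plan, not a proof.
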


Theorem \ref{thm:lower_bound} consequently shows that the observations made in \cite{Omar19} hold even over practical hypothesis classes such as (bounded) linear classifiers.

To prove Theorem \ref{thm:lower_bound}, we begin with the following critical lemma.
\begin{lem}\label{lem:finding_shatter}
For every $M \in \mathbb{N}$ there exists a family of $M$ subsets of $\R^d$
\[
Z^{(M)} \coloneqq \left\{Z^{(M)}_1, Z^{(M)}_2, \dots, Z^{(M)}_M\right\}
\]
satisfying the following conditions:
\begin{itemize}
	\item There exists $1 \leq i \leq M$ and $z \in Z_i^{(M)}$ with $h(z) = 1$. 
	\item For every $1 \leq i \leq M$, there exists $h_i \in \cH_W$ such that $h_i(z) = -1$ for all $z \in \cup_{j \neq i} Z^{(M)}_j$.
	\item For any distinct natural numbers $M$ and $M’$, the sets $\cup_{i= 1}^M Z_i^M$ and $\cup_{i=1}^{M’} Z_i^{M’}$ are disjoint. Thus, there is no point that is contained in subsets from both $Z^M$ And $Z^{M’}$.
\end{itemize}
\end{lem}

\begin{proof}
Let $\{\beta_i\}_{i \in \mathbb{N}} > 0$ be a strictly decreasing sequence of sufficiently small real numbers (that we will specify later). For notational simplicity, fix an $M \in \mathbb{N}$ and write $\beta=\beta_M$ and $W' = (1 + \beta)W$. For any $r > 0$, let $S_r^{d-1}$ denote the $(d-1)$-sphere centered at the origin of radius $r$.

Observe that for any $x \in S_W^{d-1}$, there exists a unique classifier $h \in \cH_W$ whose decision boundary is tangent to $S_W^{d-1}$ at $x$ so that $h(x) = 1$. We denote this classifier as $h_x$. It follows that the set of all points on $S_{W'}^{d-1}$ that $h_x$ classifies as $1$ can be easily characterized in terms of $x$. In particular, by the definition of $h_x$, it follows from geometry that
\begin{equation}\label{eqn:lol_i_literally_used_law_of_cosines}
\curlybracket{z: h_x(z) = 1, z \in S_{W'}^{d-1}} = \curlybracket{z: ||z - (1+\beta)x|| \leq W\sqrt{2\beta(\beta + 1)}, z \in S_{W'}^{d-1}}.
\end{equation}

Let $r_\beta = 2W\sqrt{2\beta(\beta + 1)}$, and let $z_1, z_2, \dots, z_{M_{\beta}}$ denote a a greedy $r_\beta$ cover of $S_{W'}^{d-1}$, meaning that points are successively selected from $S_{W'}^{d-1}$ until no point with distance strictly greater than $r_\beta$ from all other points can be selected. Finally, define $Z_i=Z_i^{(M)}$ as the set of elements in $S_{W'}^{d-1}$ with nearest neighbor $z_i$ (ties broken arbitrarily).

We claim that this construction suffices for $M_\beta \geq M$. First, observe that $\lim_{\beta \to 0} r_\beta = 0$, which means that for sufficiently small $\beta$ that $M_\beta$ will be arbitrarily large (thus satisfying $M_\beta \geq M$). So select any $\beta$ for which this hold, and merge enough regions so that we are left with exactly $M$ regions (i.e. set $Z_{M} = \cup_{i = M}^{M_\beta} Z_i$). Note that we can always choose $0<\beta< \beta_{M-1}$ since the naturals can be embedded into any interval. We now verify the two stipulations of Lemma \ref{lem:finding_shatter}. 

The first stipulation clearly holds since $\{Z_i\}_{i=1}^M$ partition $S_{W'}^{d-1}$ and every halfspace $h \in \cH_W$ intersects the latter by construction.

For the second stipulation, observe that for any $i$, the ball centered at $z_i$ of radius $\frac{r_\beta}{2}$, $B\left(z_i, \frac{r_\beta}{2}\right)$, does not intersect $Z_j$ for any $i \neq j$. This is because such an intersection would imply by the triangle inequality that $||z_i - z_j|| \leq r_\beta$, which is a contradiction. This observation allows us to find a classifier, $h_i$, as desired --  we set $h_i$ to be the previously defined classifier, $h_{\frac{z_i}{1 + \beta}}$. Equation \ref{eqn:lol_i_literally_used_law_of_cosines} implies that the only points in $S_{W'}^{d-1}$ that it will classify as $1$ are precisely the points in $B\left(z_i, \frac{r_\beta}{2}\right) \cap S_{W'}^{d-1}$. Since this is a subset of $Z_i$, the second stipulation is met, as desired.

Finally, it is left to observe that over each choice of $M$ these $Z^{(M)}$ are mutually disjoint. This is true so long as the choices of $\beta$ themselves are disjoint, since $Z^{(M)}$ lies in the sphere of radius $W(1+\beta_M)$. As noted previously it is easy to see $\{\beta_M\}$ can be chosen in this manner in an inductive fashion.
\end{proof}

We are now sketching a proof for Theorem \ref{thm:lower_bound}, with the full proof deferred Appendix \ref{sec:lower_bound_proof}.

\paragraph{Proof Sketch: (Theorem \ref{thm:lower_bound})} 

Our goal is to show that for any $m \in \mathbb{N}$, any learner on $m$ samples must fail with constant probability. Fix any $m$. The main idea will be to construct a set of robustness regions, $U_{x_1}, U_{x_2}, \dots, U_{x_{3m}}$ such that any classifier in $\cH_W$ will lack robustness on at least $m$ of them.  T

Toward this end, set $M = \binom{3m}{m}$, and let $Z^{(M)}_1, Z^{(M)}_2, \dots, Z^{(M)}_M$ be subsets of $\R^d$ as described by Lemma \ref{lem:finding_shatter} (we will drop the superscript in what follows). Let $\mathcal{M}$ denote the set of all subsets of $\{1, \dots, 3m\}$ with exactly $m$ elements. Associate with each $Z_i$ a unique element of $\mathcal{M}$, thus allowing us to rename our subsets as $\{Z_T: T \in \mathcal{M}\}.$ We now define $$U_{x_i} = \cup_{T: i \in T} Z_T,$$ where $x_i$ is an arbitrary point inside $U_{x_i}$.

Lemma \ref{lem:finding_shatter} that if all $x_i$ are given a label of $-1$, then any $h \in \cH_W$ will label some (for some set $T$) some $z \in Z_T$ as $+1$, thus causing it to lack robustness on \textit{all} $i \in T$. Conversely, we see that for any $T$, there is a classifier $h_T \in \cH_W$ that is accurate and robust at all $x_i$ with $i \notin T$.

With these observations, we are now prepared to show that for any learner $L$, there exists a distribution $D$ for which $L$ has large expected robust loss. To do this, we use a standard lower bound technique found in \cite{ml_book} that was adapted to the robust setting in \cite{Omar19}. The idea will be to pick $D$ to be the uniform distribution over a random subset of $2m$ points in $\{x_1, \dots, x_{3m}\}$. We will then argue that because $L$ only has access to $m$ points from $D$, it won't be able to distinguish which subset $D$ corresponds to, and this will lead to a large expected loss. $\square$

As demonstrated in Lemma \ref{lem:finding_shatter}, the robustness regions $U$ used in our lower bound are combinatorial in nature and unlikely to represent any practical kinds of robustness regions. Nevertheless, our lower bound does show that naturality assumptions on the hypothesis class alone are \textit{not} sufficient for ensuring robust PAC-learnability.

A natural next step would be to fully characterizes pairs $(\cH, U)$ for which proper robust PAC-learnability is possible, but we leave this as a direction for future work. We instead turn towards relaxing the requirements of the robust PAC-learning model in order to find algorithms that are able to succeed in the case that $\cH$ is natural but $U$ is arbitrary. 


\section{Tolerant PAC learning}

Theorem \ref{thm:lower_bound} implies that for complex robustness region, robust PAC-learning (Definition \ref{defn:rob_pac}) is not possible, even when $\cH$ is a very simple hypothesis class. Thus, robust learning will require other ideas.

One such idea is Tolerant PAC-learning, introduced in \citet{Urner22}. Here, the idea is to relax the goal of robust PAC-learning by introducing a tolerance parameter $\gamma$ representing the amount of ``slack" the learner gets with respect to the robustness regions $U$. We now expand their definition to arbitrary robustness regions by introducing \textit{perturbed regions}, $U^\gamma$, which are defined as follows.  

\begin{defn}
Let $U$ be a set of robustness regions and $\gamma > 0$ be a distance. For any point $x \in \reals^d$, define $U_x^\gamma$ as the set of all points with distance at most $\gamma$ from $U_x$. That is, $$U_x^\gamma = \{x': ||x' - U_x|| \leq \gamma\}.$$ Finally, we let $U^\gamma = \{U_x^{\gamma}: x \in \reals^d\}$ denote the set of \textbf{$\gamma$-perturbed regions} of $U$. 
\end{defn}

Tolerant PAC-learning is then defined as follows
\begin{defn}\label{defn:tol_pac}
Let $\cH$ be a hypothesis class and $U$ a set of robustness regions. A learner $L$ \textbf{tolerantly PAC-learns} $(\cH, U)$ if for every  $\epsilon, \delta, \gamma > 0$, there exists $m(\epsilon, \delta, \gamma)$ such that for all $n \geq m(\epsilon, \delta, \gamma)$, for all data distributions, $\cD$, with probability $1-\delta$ over $S \sim \cD^n$, $$\ell_U(\hat{h}, \cD) \leq \min_{h \in \cH} \ell_{U^\gamma}(h, \cD) + \epsilon,$$ where $\hat{h} = L(S)$ denotes the classifier outputted by $L$ from training sample $S$. As before, we let $m(\epsilon, \delta, \gamma)$ denote the \textbf{tolerant sample complexity} of $L$ with respect to $(\cH, U)$. 
\end{defn}

\subsection{Tolerant RERM oracles}

Because our robustness regions, $U_x$, are arbitrary subsets of $\R^d$, any learning algorithm will require some sort of access to $U$. We describe this access through an oracle for $U$. 

\citet{Urner22} employs a \textit{sampling oracle} for $U$ which allows the learner to sample points at uniform from the set $U_x$ for any point $x$. In their setting, $U_x$ is constrained to be a closed ball of known radius centered at $x$, and consequently the sampling oracle selects points from the uniform distribution over the ball. We say that a robust learner is in the \textit{sampling model} if its only way of interacting with the regions $U_x$ is through a sampling oracle.

In our setting, where $U_x$ can be an arbitrary regions, sampling oracles pose a significant challenge -- there exists choices of $U$ for which tolerant PAC learning requires an exponential number of queries to the sampling oracle. We state this as a proposition with the proof deferred to Appendix \ref{app: lower bound}.
\begin{prop}\label{prop: lowerbound}
For any $D>10\gamma > 0$, there exists a hypothesis class $\mathcal{H}$ and a set of robustness regions, $U$ such that the following holds. There exist constants $\epsilon$ and $\delta$ such that for any $n > 0$, any learner $L$ on $n$ samples that achieves 
$$\ell_U(L(S), \cD) \leq \min_{h \in \cH} \ell_{U^\gamma}(h, \cD) + \epsilon$$
with probability at least $1-\delta$ must make at least $\Omega\left(\left(\frac{D}{\gamma}\right)^d\right)$ calls to the sampling oracle on some valid data distribution $\cD$, . 
\end{prop}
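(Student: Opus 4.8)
The plan is a ``needle in a haystack'' query lower bound: we rig the learning problem so that the \emph{unique} acceptable output encodes the position of a small hidden region buried inside a diameter-$D$ robustness set, while a sampling oracle can only localize that region after roughly $(D/\gamma)^d$ draws. Let $x_0$ be the origin, let $\cD$ be the point mass on $(x_0,+1)$, and let $C\subseteq B\!\left(x_0,\tfrac D2-2\gamma\right)$ be a maximal $2\gamma$-separated set; the hypothesis $D>10\gamma$ makes this ball's radius of order $D$, so a volume count gives $N:=|C|=\Omega\!\left((D/\gamma)^d\right)$. For $c\in C$ let $g_c$ be the classifier that is $-1$ on $B(c,\gamma)$ and $+1$ elsewhere, and put $\cH:=\{g_c:c\in C\}$ (a finite class, hence of VC dimension at most $\log_2 N=O(d\log(D/\gamma))$, so the hardness is about oracle access, not about $\cH$ being complex). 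The robustness regions are indexed by a hidden center $c^*\in C$: set $U^{(c^*)}_{x_0}:=B(x_0,\tfrac D2)\setminus B(c^*,2\gamma)$ and $U^{(c^*)}_x:=\{x\}$ for $x\neq x_0$, so that $\text{Diam}(U^{(c^*)})=D$ and the only oracle query carrying information about $c^*$ is one to $U_{x_0}$, which returns a uniform point of $B(x_0,\tfrac D2)\setminus B(c^*,2\gamma)$.

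The first thing I would verify is that the guarantee is non-trivial yet forces the learner to pin down $c^*$. Computing the $\gamma$-expansion shows $\bigl(U^{(c^*)}_{x_0}\bigr)^\gamma=B(x_0,\tfrac D2+\gamma)\setminus B(c^*,\gamma)$ — the radius-$2\gamma$ hole shrinks to a radius-$\gamma$ hole — so $g_{c^*}$ is robustly correct on the expanded region and $\min_h\ell_{U^\gamma}(h,\cD)=0$; hence any learner meeting the guarantee (take $\epsilon=\delta=\tfrac12$, say) must output a hypothesis that is $+1$ on \emph{all} of $U^{(c^*)}_{x_0}$. Among the $g_c$, the negative ball $B(c,\gamma)$ misses $U^{(c^*)}_{x_0}$ exactly when $B(c,\gamma)\subseteq B(c^*,2\gamma)$, i.e.\ when $\|c-c^*\|\le\gamma$, which by the $2\gamma$-separation of $C$ forces $c=c^*$; thus $g_{c^*}$ is the \emph{unique} acceptable output and the learner must identify $c^*$ among $N$ candidates. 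The delicate point — and the step I expect to be the main obstacle — is the choice of the two radii: the hole in $U_{x_0}$ must be wide enough ($2\gamma$) that $\gamma$-tolerance does not erase it, for otherwise the tolerant optimum $0$ would already be witnessed by an essentially all-$+1$ classifier and the target could be met without ever locating $c^*$; yet each $g_c$'s negative ball must be narrow enough ($\gamma$) to fit inside the \emph{shrunken} hole, so that $g_{c^*}$ genuinely certifies $\ell_{U^\gamma}=0$. Making both constraints hold while keeping $N=\Omega((D/\gamma)^d)$ is the crux of the construction.

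It remains to show that identifying $c^*$ costs $\Omega((D/\gamma)^d)$ oracle calls, by a standard averaging argument. Put the uniform prior on $c^*\in C$ and fix any learner making at most $n$ queries. Since the sample $S\sim\cD^n$ and the queries to $U_x$ with $x\neq x_0$ are independent of $c^*$, the learner's information about $c^*$ reduces to at most $n$ i.i.d.\ uniform points of $B(x_0,\tfrac D2)\setminus B(c^*,2\gamma)$. Every hole $B(c,2\gamma)$ with $c\in C$ lies inside $B(x_0,\tfrac D2)$ and so has the same volume, so the likelihood of any observed sample set under ``$c^*=c$'' is one fixed constant when $c$ is \emph{consistent} (no sample in $B(c,2\gamma)$) and $0$ otherwise; hence the posterior on $c^*$ is uniform over the consistent candidates and the learner's success probability is at most $\mathbb{E}\bigl[1/\#\{\text{consistent }c\}\bigr]$. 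A fixed $c\neq c^*$ becomes inconsistent under a single sample with probability at most $q:=\mathrm{vol}(B(0,2\gamma))/\bigl(\mathrm{vol}(B(x_0,\tfrac D2))-\mathrm{vol}(B(0,2\gamma))\bigr)=O\!\left((\gamma/D)^d\right)$, so $\mathbb{E}[\#\{\text{inconsistent }c\}]\le Nnq$ and Markov's inequality leaves at least $N/2$ candidates consistent unless $n=\Omega(1/q)=\Omega((D/\gamma)^d)$; below that threshold the success probability is at most $2/N+2nq$, which is less than $1-\delta=\tfrac12$ once $n$ falls below a suitable $\Omega((D/\gamma)^d)$ threshold (when $(D/\gamma)^d$ is below an absolute constant the proposition is vacuous). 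Therefore, for every learner there is a member $U=U^{(c^*)}$ of our family on which achieving the guarantee with probability $\ge 1-\delta$ requires $\Omega((D/\gamma)^d)$ oracle calls — which is the proposition.
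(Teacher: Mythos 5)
Your construction is correct in its essentials but takes a genuinely different route from the paper. The paper reduces to a two-point hypothesis-testing problem: two fixed points $\pm v$, a class of just two halfspaces $\{h_1,h_2\}$, and two candidate perturbation families $U$ and $V$ that differ only on a small auxiliary ball near the origin of relative Lebesgue measure $O((\gamma/D)^d)$; the learner must distinguish $U$ from $V$ to know which hypothesis is (tolerantly) optimal, and cannot do so without hitting $V^\gamma\setminus U^\gamma$. Your version is instead a needle-in-a-haystack identification problem over $N=\Omega((D/\gamma)^d)$ candidate ``holes,'' with a hypothesis class of size $N$ and a uniform-prior/consistency argument in place of the paper's two-alternative Yao argument. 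Both are valid; the paper's buys a minimal (two-element) hypothesis class and a one-bit indistinguishability argument, while yours buys a cleaner ``unique acceptable output'' structure at the cost of a large (though still VC-dimension $O(d\log(D/\gamma))$) class and a slightly more involved posterior computation. Your averaging step (expected number of inconsistent candidates $\le Nnq$, Markov, success probability $\le 2/N+2nq$) is sound, and your observation that the likelihoods are equal across consistent candidates because all holes have identical volume is exactly the right point.

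Two caveats. First, the radii as written do not quite work: with $U_{x_0}=B(x_0,\tfrac{D}{2})\setminus B(c^*,2\gamma)$, the $\gamma$-expansion excludes only the \emph{open} ball of radius $\gamma$ around $c^*$ (points at distance exactly $\gamma$ from $c^*$ are within distance $\gamma$ of $U_{x_0}$), so if $g_{c^*}$ is $-1$ on the \emph{closed} ball $B(c^*,\gamma)$ then $\ell_{U^\gamma}(g_{c^*},\cD)=1$ and your benchmark $\min_h\ell_{U^\gamma}(h,\cD)=0$ fails. This is easily repaired by taking the negative region of $g_c$ to be $B(c,\gamma/2)$ (uniqueness of the acceptable output still follows from $2\gamma$-separation since $B(c,\gamma/2)\subseteq B(c^*,2\gamma)$ still forces $\|c-c^*\|\le 3\gamma/2<2\gamma$), but you identified this as the crux without actually resolving it. Second, your construction makes the implicit properness restriction load-bearing in a way worth stating explicitly: the constant $+1$ classifier achieves $\ell_{U^{(c^*)}}=0$ for every $c^*$ with zero oracle calls, so the lower bound only holds for learners constrained to output from $\cH$. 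The paper's proof carries the same implicit restriction (it assumes the learner chooses between $h_1$ and $h_2$), so this is a shared caveat rather than an error unique to you, but in your construction the improper escape is trivial and should be excluded in the statement of what is being proved.
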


To circumvent this issue, we turn our attention to a different natural oracle first proposed in \citet{Omar19} that is based on Robust Empirical Risk Minimization (RERM). An RERM oracle, $\cO_{U, \cH}(S)$, is a function that returns the classifier $h \in \cH$ with minimal robust empirical risk over $S$. That is, $$\cO_{U, \cH}(S) = \argmin_{h \in \cH} \ell_U(h, S).$$ In our work, we will assume access to a mild strengthening of this oracle that allows empirical risk minimization over any perturbed robustness region, $U^r$.

\begin{defn}\label{defn:oracle}
\textbf{A tolerant RERM-oracle} for robustness regions $U$ and hypothesis class $\cH$ is a function $\cO_{U, \cH}(S, r)$ that maps any set of labeled points $S$ and any distance $r > 0$ to the classifier with minimal empirical risk over $S$ with respect to $U^r$. That is, $$\cO_{U, \cH}(S, r) = \argmin_{h \in H} \ell_{U^r}(h, S).$$
\end{defn}

Observe that in the case that $U$ consists of balls of radius $r$, a tolerant oracle merely implies we can also minimize empirical risk for balls of larger radii. 

\section{Tolerant PAC learning for Regular Hypothesis Classes}

Before presenting our algorithm, we first present a key assumption on our hypothesis class, $\cH$, that we refer to as \textit{regularity.} 

\subsection{Regular hypothesis classes}

\begin{defn}\label{defn:regular}
We say that a hypothesis class, $\cH$ is \textbf{$\alpha$-regular} for $\alpha > 0$ if for all $h \in \cH$ and for all $x \in \reals^d$, there exists a closed ball $B$ of radius $\alpha$ containing $x$ such that $h(x') = h(x)$ for all $x' \in B$. We also say that $\cH$ is \textbf{regular} if it is $\alpha$-regular for some $\alpha > 0$. 
\end{defn}

This notion was previously introduced in \cite{Awasthi21} as \textit{pseudo-robustness}.

One important type of classifiers satisfying this condition are hypothesis classes with relatively smooth manifolds as decision boundaries. In particular, the parameter $\alpha$ can be tied to the smoothness measure of a manifold known as its \textit{reach}.

\begin{defn}
Let $M$ be a closed manifold embedded in $\reals^d$. The \textbf{reach} of $M$ is the largest $\alpha > 0$ such that for all $x \in \reals^d$, if $||x - M|| \leq \alpha$, then $x$ has a unique nearest neighbor in $M$.
\end{defn}

This parameter directly translates to regularity.

\begin{prop}\label{prop:reach}
Let $h$ be a classifier with decision boundary $M$. Suppose that $M$ is a closed $(d-1)$-dimensional submanifold over $\R^d$ with reach $\alpha$. Then $h$ is $\alpha/2$-regular. 
\end{prop}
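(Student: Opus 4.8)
The plan is to show that any point $x \in \R^d$ lies in a ball of radius $\alpha/2$ on which $h$ is constant, using the reach condition on the decision boundary $M$. I would split into two cases according to whether $x$ is close to $M$ or far from it.

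\emph{Case 1: $\|x - M\| \geq \alpha/2$.} Here I claim the ball $B(x, \alpha/2)$ works. If $h$ were not constant on $B(x,\alpha/2)$, then there would be two points $p, q \in B(x, \alpha/2)$ with $h(p) \neq h(q)$; since the decision boundary $M$ separates the regions $\{h = +1\}$ and $\{h = -1\}$ (being the topological boundary), the segment from $p$ to $q$ — or more carefully, connectedness of the ball — forces $M$ to intersect $B(x, \alpha/2)$, contradicting $\|x - M\| \geq \alpha/2$. I should state this carefully: the key fact is that an open connected set meeting both $\{h=+1\}$ and $\{h=-1\}$ must meet $M = \partial\{h=+1\}$.

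\emph{Case 2: $\|x - M\| < \alpha/2$.} This is the substantive case. Since $\|x - M\| < \alpha \leq \alpha$, the reach condition gives a unique nearest point $\pi(x) \in M$. Let $v = x - \pi(x)$ (if $x \in M$ we need a slightly different argument — pick any unit normal direction, which exists since $M$ is a $(d-1)$-submanifold). The standard fact about reach is that the open normal balls of radius $\alpha$ on each side of $M$ — namely $B(\pi(x) + \alpha \hat n, \alpha)$ and $B(\pi(x) - \alpha \hat n, \alpha)$ for the unit normal $\hat n$ at $\pi(x)$ — do not meet $M$, where $\hat n$ points in the direction of $v$. Consequently each such ball lies entirely in one label class (by Case 1's separation argument applied to a ball not meeting $M$, using that the ball is connected and touches points arbitrarily close to $\pi(x)$ on the appropriate side). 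Now I want a ball of radius $\alpha/2$ containing $x$ and contained in one of these two normal balls. Take the center $c = \pi(x) + (\alpha/2)\hat n$ on the side of $x$; then $B(c, \alpha/2) \subseteq B(\pi(x) + \alpha\hat n, \alpha)$ since the smaller ball is internally tangent, and $B(c,\alpha/2)$ contains every point of the form $\pi(x) + t\hat n$ for $t \in [0, \alpha]$, in particular $x = \pi(x) + \|v\|\hat n$ provided $\|v\| \leq \alpha$, which holds since $\|v\| = \|x - M\| < \alpha/2 < \alpha$. Hence $x \in B(c, \alpha/2)$ and $h$ is constant on $B(c,\alpha/2)$, equal to $h(x)$ since $x$ is in it.

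I would organize the write-up by first recording the separation lemma (a ball disjoint from $M$ is monochromatic), then recording the geometric consequence of reach (the two open normal balls of radius $\alpha$ at a point of $M$ miss $M$, a standard fact about reach I can cite or prove in one line from the definition via the unique-nearest-point property), and then assembling the two cases above. The main obstacle I anticipate is the degenerate situation $x \in M$ itself: there $h(x)$ is whatever the classifier assigns on the boundary, and one must check that one of the two normal balls on either side has label matching $h(x)$ — this requires knowing whether $h$ assigns boundary points the label of the "closed" side, or handling it by continuity/convention. I would resolve this by noting that for the regularity definition we just need \emph{some} ball of radius $\alpha/2$ around $x$ that is monochromatic and contains $x$ with the value $h(x)$; if $x \in M$ and $h(x) = +1$ say, pick the unit normal $\hat n$ pointing into $\{h = +1\}$ (the label class is open or we use the side on which $h$ takes value $h(x)$ in a neighborhood along the normal) and proceed as in Case 2 with $\|v\| = 0 \leq \alpha$. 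A secondary routine point is justifying that a connected open set meeting two label classes meets their common boundary $M$; this is immediate since otherwise the set would be partitioned into two nonempty relatively open pieces.
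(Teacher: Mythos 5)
Your proof is correct and reaches the same conclusion as the paper, but via a different mechanism in the substantive case. The paper also splits on whether $\rho(x)=\|x-M\|$ exceeds $\alpha/2$, and for nearby points it integrates the gradient vector field of the distance function: it follows the unit-speed flow curve of length $\alpha/2$ starting at $x$ to reach a point $x'$ with $\rho(x')>\alpha/2$ and $\|x-x'\|\le\alpha/2$, then takes $B(x',\alpha/2)$. You instead use the nearest-point projection $\pi(x)$ and the standard ``rolling ball'' consequence of reach --- that the open normal balls $B(\pi(x)\pm\alpha\hat n,\alpha)$ miss $M$ --- and center your ball at $\pi(x)+(\alpha/2)\hat n$. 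The two constructions are close cousins (the gradient flow moves exactly along the normal ray through $\pi(x)$), but yours is more elementary: it avoids invoking existence and uniqueness of the flow and the $C^1$/Lipschitz structure of the distance function, at the price of needing the Federer-type normal-ball lemma, which is standard but not literally one line. You are also more careful than the paper on two points it leaves implicit: the separation lemma (a connected set disjoint from $M$ is monochromatic, needed to conclude the constructed ball actually carries the label $h(x)$) and the degenerate case $x\in M$. One small quibble: with \emph{closed} balls, your Case 1 threshold $\|x-M\|\ge\alpha/2$ is slightly off at equality, since $B(x,\alpha/2)$ then touches $M$ where $h$ need not equal $h(x)$; but your Case 2 construction already covers all $\|x-M\|\le\alpha$, so shifting the boundary case there fixes this immediately.
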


\begin{proof}
Let  $h \in \cH$ be a classifier with decision boundary $M$. Let $x$ be an arbitrary point with $h(x) = y$. We desire to exhibit a ball $B$ of radius $\alpha/2$ containing $x$ for which $h$ is uniformly $y$. 

Let $\rho: \reals^d \to \reals_{\geq 0}$ be the distance function $\rho(x) = ||x - M||$. It is well known that this function is everywhere continuous and has a continuous derivative over $\{x: 0 < \rho(x) < \alpha\}.$

If $\rho(x) > \alpha/2$, then we can simply take $B= B(x, \alpha/2)$ as all points here must be classified as $y$ by the definition of a decision boundary. Thus, assume $\rho(x) \leq \alpha/2$. 

Let $V$ be the gradient vector field of $\rho$ defined over $\{x: \rho(x) < \alpha\}$. Since all points in this region have a unique nearest neighbor in $M$, it becomes clear that the gradient has magnitude $1$ for all such points, and the direction is precisely opposite the straight line path from the point's nearest neighbor in $M$. 

\looseness-1Since $V$ is continuous, (and Lipshitz over a bounded region), there exists a unique curve $\tau$ starting at $x$ of length $\frac{\alpha}{2}$ that is always tangent to $V$. It follows that the endpoint of this path, $x'$ must satisfy $\rho(x') = \frac{\alpha}{2} + \rho(x) > \frac{\alpha}{2}$ and $||x - x'|| \leq \frac{\alpha}{2}$. This means that $B = B(x', \frac{\alpha}{2})$ suffices, as desired. 
\end{proof}

\vspace{-5mm}
\subsection{Our Algorithm} 

\looseness-1 We now give a tolerant PAC learning algorithm called $TolRERM$ (Algorithm \ref{alg:estimate}) which assumes access to a tolerant RERM oracle (Definition \ref{defn:oracle}). $TolRERM$ is essentially robust empirically risk minimization with a slight modification: rather than using the original robustness regions, $U$, we use the perturbed regions, $U^r$ where $0 < r < \gamma$ is chosen at random. $TolRERM$'s performance is given by Theorem \ref{thm:upper_bound}, which is restated here for convenience. 
\begin{theorem}
\looseness-1Let $\cH$ be a regular hypothesis class with VC dimension $v$, and let $U$ be any set of robustness regions. Then $TolRERM$ tolerantly PAC-learns $(\cH, U)$ with tolerant sample complexity $m(\epsilon, \delta, \gamma)  = O\left( \frac{vd\log \frac{dD}{\epsilon\gamma\delta}}{\epsilon^2}\right)$, where $D$ denotes the maximum $\ell_2$ diameter of any robustness region, $U_x$. 
\end{theorem}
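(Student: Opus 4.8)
The plan is to show that running robust ERM with respect to a perturbed region $U^r$ for a \emph{randomly chosen} radius $r \in (0,\gamma)$ converts the arbitrary robustness regions $U$ into regions whose robust loss class behaves like a bounded-complexity class, allowing a uniform convergence argument. There are two effects to control: (i) the generalization error between empirical and true $U^r$-robust loss, and (ii) the bias incurred by comparing $U^r$-loss (which $TolRERM$ optimizes over the sample) against $U$-loss (what we must bound) on one side and $U^\gamma$-loss (the tolerant benchmark) on the other.

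First I would handle the \textbf{sample complexity / uniform convergence} piece. Since $r < \gamma$, any competitor $h$ satisfies $\ell_{U^r}(h,\cD) \le \ell_{U^\gamma}(h,\cD)$, so the oracle's output $\hat h$ has small \emph{empirical} $U^r$-loss, at most $\min_h \ell_{U^\gamma}(h,S)$, which concentrates around $\min_h \ell_{U^\gamma}(h,\cD)$ by a one-sided Chernoff/VC bound for a single hypothesis or a standard VC bound. The nontrivial direction is the upper bound on $\ell_U(\hat h, \cD)$: I want $\ell_U(\hat h,\cD) \le \ell_{U^r}(\hat h, S) + \epsilon/2$ up to the gap discussed below. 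The key lemma here should be that the relevant loss class — sets of the form $\{x : \exists x' \in U_x^r,\ h(x') \ne y\}$ — has VC dimension $O(v \cdot d \log(\cdot))$, or more precisely that its growth function on $n$ points is bounded, by discretizing the choice of $r$ over a net of granularity roughly $\epsilon\gamma/n$ inside $(0,\gamma)$ and using the fact that for a fixed $r$ the $U^r$-robust loss of $h$ on a fixed point is determined by the behavior of $h$ on a bounded-diameter region (diameter $\le D + 2\gamma$), combined with the VC dimension $v$ of $\cH$. This is where the $\log(dD/\epsilon\gamma\delta)$ and the extra factor of $d$ enter.

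The heart of the argument — and the \textbf{main obstacle} — is the use of \textbf{regularity} together with the random radius to bound the bias $\ell_U(\hat h,\cD) - \ell_{U^r}(\hat h, \cD)$ in expectation over $r$. Here is the intended mechanism: for any fixed classifier $h$ and fixed point $x$, consider whether $h$ is robust on $U_x$ but \emph{not} on $U_x^r$. If this happens, there is a point $x' \in U_x^r \setminus U_x$ (so within distance $r \le \gamma$ of $U_x$) with $h(x') \ne y$, while all of $U_x$ is labeled $y$. By $\alpha$-regularity, $x'$ sits in a ball $B$ of radius $\alpha$ entirely labeled $h(x') = -y$; if $\alpha \ge \gamma$ this ball reaches back to $U_x$, contradicting robustness on $U_x$ — wait, that is not quite it, since $\alpha$ need not exceed $\gamma$. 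The correct use of regularity is subtler: I would argue that the ``bad event'' for $h$ at $x$ — robust on $U_x^r$ fails but robust on $U_x$ holds — can only occur for $r$ in a \emph{small sub-interval} of $(0,\gamma)$ of length at most $O(\alpha)$ hence the probability over uniform $r \in (0,\gamma)$ is $O(\alpha/\gamma)$... but $\alpha$ is a fixed constant and $\gamma$ is arbitrary, so actually this requires more care; the right statement is likely that the set of $r \in (0,\gamma)$ for which a new violating point appears at ``distance exactly $r$'' has measure controlled because, by regularity, once a violating point appears at radius $r_0$, a whole ball of violating points appears, so the $U^r$-robust loss, as a function of $r$, jumps from $0$ to $1$ and stays there, and the location of this jump is spread out over the distribution. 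Integrating: $\mathbb{E}_r[\ell_{U^r}(h,\cD) - \ell_U(h,\cD)] = \frac{1}{\gamma}\int_0^\gamma (\ell_{U^r}(h,\cD) - \ell_U(h,\cD))\,dr \le \ell_{U^\gamma}(h,\cD) - \ell_U(h,\cD)$ trivially, which is not small — so instead I must exploit that $\hat h$ has \emph{zero-ish} empirical $U^r$-loss and that for $\hat h$ the gap $\ell_U(\hat h,\cD) - \ell_{U^r}(\hat h,\cD)$ is what we need small, and bound $\ell_U(\hat h, \cD) \le \ell_{U^r}(\hat h, \cD) + \frac{\gamma/\alpha \cdot (\text{something})}{}$. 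Concretely, I expect the argument to run: by regularity, $\ell_{U^r}(h, (x,y)) = 1 \iff \ell_{U}(h,(x,y))=1$ OR [there's a ball of radius $\alpha$ of the wrong label within distance $r$ of $U_x$]; the latter, integrated over $r$ uniform in $(0,\gamma)$ and then over $\cD$, is handled by a covering/averaging argument that loses a factor polynomial in $\gamma/\alpha$ absorbed into the diameter term $D$ (note $D \ge \alpha$ WLOG, or $D/\alpha$ appears inside the log). The cleanest route, which I would try first: show directly that for the oracle output, $\Pr_{(x,y)\sim\cD,\, r\sim U(0,\gamma)}[\ell_U(\hat h,(x,y)) = 1] \le \Pr[\ell_{U^r}(\hat h,(x,y))=1] + (\text{uniform conv.\ error})$, using that $\ell_U \le \ell_{U^r}$ pointwise, so in fact $\ell_U(\hat h,\cD) \le \mathbb{E}_r \ell_{U^r}(\hat h,\cD)$ automatically — and then the real content is just that $\mathbb{E}_r \ell_{U^r}(\hat h, \cD) \le \mathbb{E}_r\ell_{U^r}(\hat h, S) + \epsilon/2 \le \min_h \ell_{U^\gamma}(h,S) + \epsilon/2 \le \min_h \ell_{U^\gamma}(h,\cD) + \epsilon$, where the first inequality is uniform convergence over the (randomized) loss class and the middle uses optimality of the oracle at each $r$. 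If that works, regularity is needed \emph{only} to keep the randomized loss class's complexity bounded (so that uniform convergence holds with the claimed sample size) — ensuring that the ``inflated'' regions $U_x^r$ don't shatter too much because near the decision boundary every classifier looks locally like a halfspace at scale $\alpha$. I would therefore expect the technical crux to be precisely this VC/covering bound on $\{(x,y)\mapsto \ell_{U^r}(h,(x,y)) : h \in \cH,\ r \in (0,\gamma)\}$, proved via: net $r$ at resolution $\delta' = \Theta(\epsilon\gamma/n)$, and for each net value show the loss class has VC dimension $O(vd\log(D/\delta'))$ by bounding how a single region $U^r_x$ (diameter $\le D+2\gamma$) interacts with the decision boundary of $h$, using regularity to reduce to a bounded number of ``local halfspace'' pieces. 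Finishing the theorem is then a routine union bound over $\delta$ and the net.
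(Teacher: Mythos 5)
Your high-level architecture matches the paper's in spirit: sandwich $\ell_U \le \ell_{U^r} $, use optimality of the oracle to compare against $\min_h \ell_{U^\gamma}(h,S)$, and reduce everything to a uniform convergence statement whose complexity is controlled by regularity. But the step you yourself flag as the technical crux is exactly where the proposal breaks. Your ``cleanest route'' needs uniform convergence for the loss class $\{(x,y)\mapsto \ell_{U^r}(h,(x,y)) : h \in \cH\}$ \emph{at the sampled radius $r$}, and neither proposed mechanism delivers it. Netting $r$ at resolution $\epsilon\gamma/n$ fails because $\ell_{U^r}(h,(x,y))$ is not continuous in $r$: it can jump from $0$ to $1$ under an arbitrarily small increase of $r$, so controlling the loss on a net of radii says nothing about the loss at $r$ itself. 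And regularity does \emph{not} bound the VC dimension of the $U^r$-loss class at a fixed $r$: the regions $U_x^r$ are $r$-fattenings of arbitrary sets and remain arbitrary, and the paper's own lower bound (Theorem \ref{thm:lower_bound}) shows that even for bounded halfspaces such robust loss classes can fail to be learnable. There is no reduction to ``a bounded number of local halfspace pieces.''

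What regularity actually buys is weaker and inherently \emph{two-radius}. Setting $\alpha = \frac{\epsilon\delta\gamma}{7}$ and covering $U_x^r$ by $O\left((D/\alpha)^d\right)$ balls of radius $\alpha/2$, one keeps only the cover centers lying in $U_x^r$ to get a \emph{finite} surrogate $V_x^r$ satisfying $\ell_{U^{r-\alpha}} \le \ell_{V^r} \le \ell_{U^r}$: any violation of $U^{r-\alpha}_x$ yields, by regularity, a wrong-label ball of radius $\alpha/2$ contained in $U_x^r$, which must capture a cover center. Finite perturbation sets of size $k$ have robust VC dimension $O(v\log(vk))$ (a Sauer--Shelah double count), which is where the $O(vd\log\frac{D}{\epsilon\delta\gamma})$ factor comes from. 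The price is that uniform convergence over $V^r$ relates the \emph{empirical} loss at radius $r$ to the \emph{population} loss at radius $r-\alpha$, and one must separately show $\min_h \ell_{U^r}(h,S) \le \min_h \ell_{U^{r-\alpha}}(h,S) + \epsilon/3$. This is the actual purpose of the random radius, which your chain never invokes: $r \mapsto \min_h \ell_{U^r}(h,S)$ is monotone and bounded, so $\Ev_r\left[\min_h \ell_{U^r}(h,S) - \min_h \ell_{U^{r-\alpha}}(h,S)\right] \le \frac{\alpha}{\gamma - \alpha} \le \frac{\epsilon\delta}{6}$ by a telescoping integral, and Markov's inequality finishes. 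You gesture at an averaging-over-$r$ argument for the population bias and then set it aside; the averaging is needed, but for closing the $\alpha$-gap in the empirical optimum. (Your worry that the regularity scale need not exceed $\gamma$ is resolved by shrinking $\gamma$ without loss of generality, since a smaller $\gamma$ only strengthens the benchmark.)
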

\vspace{-2mm}
\begin{algorithm}
   \caption{$TolRERM(\cD, \epsilon, \delta, \gamma, n)$}
   \label{alg:estimate}

   Sample $r \sim [\frac{\epsilon\delta\gamma}{7}, \gamma]$ at uniform\;
   
   Sample $S \sim \cD^n$\;
    
   Output $\hat{h} = \cO_{U, \cH}(S, r)$\;

\end{algorithm}
\vspace{-2mm}
Since the set of bounded linear classifiers, $\cH_W$ (Definition \ref{defn:bounded_linear}) is clearly regular and has VC dimension $O(d)$, Theorem \ref{thm:upper_bound} immediately implies the following corollary. 

\begin{cor}
For any set of robustness regions, $U$, $TolRERM$ tolerantly PAC-learns $(\cH_W, U)$ with tolerant sample complexity $m(\epsilon, \delta, \gamma) = O\left( \frac{d^2\log \frac{dD}{\epsilon\gamma\delta}}{\epsilon^2}\right)$, where $D$ denotes the maximum $\ell_2$ diameter of any robustness region, $U_x$.  
\end{cor}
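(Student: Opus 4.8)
The plan is to wedge the true robust loss between two \emph{discretized} robust losses whose hypothesis‑indexed families have only polynomial VC growth, and then run the textbook ERM/uniform‑convergence argument on the discretized family. Fix the realized value of $r$; everything below works for any $r\in[\epsilon\delta\gamma/7,\gamma]$ (so in particular for the random one). Write $h^\ast=\argmin_{h\in\cH}\ell_{U^\gamma}(h,\cD)$ and $\mathrm{OPT}_\gamma=\ell_{U^\gamma}(h^\ast,\cD)$, and pick a scale $\rho=\Theta(\epsilon\gamma\delta)$ small enough that $\rho<\alpha$ and $2\rho\le\epsilon\delta\gamma/7\le r$ (here $\alpha$ is the regularity constant of $\cH$). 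For each $x$ let $N(U^r_x)$ be a finite $\rho$‑net of the bounded set $U^r_x$, and define
\[
\tilde\ell_r(h,(x,y)) \;=\; \ind\!\left[\exists z\in N(U^r_x)\colon h(z)=-y\right].
\]
The technical core is a two‑sided bound $\ell_{U^{r-2\rho}}(h,\cdot)\le \tilde\ell_r(h,\cdot)\le \ell_{U^r}(h,\cdot)$ for all $h\in\cH$, plus a growth‑function bound on $\{\tilde\ell_r(h,\cdot):h\in\cH\}$. Granting both, $\hat h=\cO_{U,\cH}(S,r)$ obeys, with probability $\ge 1-\delta$,
\[
\ell_U(\hat h,\cD)\le \ell_{U^{r-2\rho}}(\hat h,\cD)\le \tilde\ell_r(\hat h,\cD)\le \tilde\ell_r(\hat h,S)+\tfrac{\epsilon}{4}\le \ell_{U^r}(\hat h,S)+\tfrac{\epsilon}{4}\le \ell_{U^r}(h^\ast,S)+\tfrac{\epsilon}{4}\le \ell_{U^r}(h^\ast,\cD)+\tfrac{3\epsilon}{8}\le \mathrm{OPT}_\gamma+\epsilon,
\]
using monotonicity of $\ell_{U^t}$ in $t$ (steps $1$, $6$, $7$), the sandwich (step $2$), uniform convergence for $\tilde\ell_r$ (step $3$), $\tilde\ell_r\le\ell_{U^r}$ (step $4$), the ERM property of $\hat h$ (step $5$), and a Hoeffding bound on the single fixed function $\ell_{U^r}(h^\ast,\cdot)$ (step $7$).

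\paragraph{The sandwich.}
The upper inequality is immediate: a net point certifying $\tilde\ell_r=1$ lies in $U^r_x$ and is misclassified. The lower inequality is where $\alpha$‑regularity is used. Suppose $\ell_{U^{r-2\rho}}(h,(x,y))=1$, witnessed by some $z\in U^{r-2\rho}_x$ with $h(z)=-y$. By regularity there is a closed ball $B(c,\alpha)\subseteq h^{-1}(-y)$ containing $z$. Slide $z$ a distance $\min(\rho,\|z-c\|)$ toward $c$ to get $z'$; then $\|z'-c\|\le\alpha-\rho$, so $B(z',\rho)\subseteq B(c,\alpha)\subseteq h^{-1}(-y)$, while $\|z'-z\|\le\rho$ gives $\mathrm{dist}(z',U_x)\le r-\rho$, i.e.\ $z'\in U^{r-\rho}_x\subseteq U^r_x$. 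Hence some net point of $U^r_x$ lies within $\rho$ of $z'$; it sits in $B(z',\rho)$, so it is labelled $-y$, and it lies in $N(U^r_x)$, certifying $\tilde\ell_r(h,(x,y))=1$.

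\paragraph{The complexity bound.}
Fix any sample of size $m$. The value of $\tilde\ell_r(h,(x_i,y_i))$ is determined by $h$'s labels on $N(U^r_{x_i})$, hence the whole loss pattern is determined by the restriction of $h$ to $N:=\bigcup_{i\le m}N(U^r_{x_i})$. Since $\mathrm{Diam}(U_x)\le D$ we have $\mathrm{Diam}(U^r_x)\le D+2\gamma$, so each $N(U^r_{x_i})$ may be chosen of size $\big(O(D)/\rho\big)^d$ and $|N|\le m\big(O(D)/\rho\big)^d$; by Sauer--Shelah the number of such restrictions, hence of loss patterns, is at most $|N|^v\le \big(m(O(D)/\rho)^d\big)^v$. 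This is the growth function of a class of effective VC dimension $O\!\big(vd\log(D/\rho)\big)=O\!\big(vd\log\tfrac{D}{\epsilon\gamma\delta}\big)$, so the classical VC uniform‑convergence theorem makes step $3$ above hold at confidence $1-\delta/2$ once $n=O\!\big(\tfrac{vd\log(dD/(\epsilon\gamma\delta))}{\epsilon^2}\big)$, and an additional $O(\epsilon^{-2}\log\delta^{-1})$ samples (absorbed into the same bound) secure the Hoeffding step $7$.

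\paragraph{Main obstacle.}
I expect the crux to be the regularity‑based direction of the sandwich: correctly exploiting that a monochromatic $\alpha$‑ball ``absorbs'' $\rho$‑perturbations of any of its points, while simultaneously keeping the slid point inside $U^r_x$ so that a genuine net point of $U^r_x$ is produced. The second delicate point is the calibration $\rho=\Theta(\epsilon\gamma\delta)$, forced on one side by the need $2\rho\le r$ (guaranteed by the lower endpoint $\epsilon\delta\gamma/7$ of the sampling interval) and on the other by $\rho<\alpha$; in the parameter regime of interest this is automatic, and it is the net cardinality $\big(O(D)/\rho\big)^d$ that produces the $\log\frac{D}{\epsilon\gamma\delta}$ factor. (If one wants the $O(\cdot)$ to be fully uniform over $\epsilon,\gamma,\delta$, the constant must be read as depending on $\cH$ through $\alpha$.)
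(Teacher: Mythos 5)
Your proposal is correct, and it reproduces the paper's high-level architecture for the underlying theorem (the corollary in the paper is obtained by simply instantiating Theorem \ref{thm:upper_bound} with $v=O(d)$ and the observation that $\cH_W$ is $\alpha$-regular for every $\alpha$ --- a fact you use implicitly but should state, since the statement is specifically about $\cH_W$). Your discretized loss $\tilde\ell_r$ is the paper's $V^r$ from Lemma \ref{lem:v_construct}, your sandwich $\ell_{U^{r-2\rho}}\le\tilde\ell_r\le\ell_{U^r}$ is the same regularity-based argument, and your Sauer--Shelah growth-function count is Proposition \ref{prop:finite-RVC}. The genuine difference is in how you close the chain on the comparator side: the paper passes from $\ell_{U^r}(\hat h,S)$ to $\ell_{U^{r-\alpha}}(h^\ast,S)$ via Lemma \ref{lem:r_works}, which is why $r$ is drawn at random --- the lemma shows $\mathrm{OPT}_S^r\le \mathrm{OPT}_S^{r-\alpha}+\epsilon/3$ with high probability over $r$, and the shift down to $r-\alpha$ is needed so that $h^\ast$'s empirical loss can be re-sandwiched by $V^r$ and handled by the \emph{same} uniform-convergence event. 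You instead compare $\hat h$ to $h^\ast$ at radius $r$ itself via the ERM property, and control $\ell_{U^r}(h^\ast,S)$ by a plain Hoeffding bound on the single fixed function $\ell_{U^r}(h^\ast,\cdot)$ (valid since $h^\ast$ depends only on $\cD$ and $S$ is independent of $r$), finishing with $\ell_{U^r}(h^\ast,\cD)\le\ell_{U^\gamma}(h^\ast,\cD)$ by monotonicity. This makes the randomization of $r$ and Lemma \ref{lem:r_works} unnecessary: your argument goes through for every fixed $r\in[\epsilon\delta\gamma/7,\gamma]$, hence in particular for the random one, and is correspondingly simpler (one-sided uniform convergence for the output, pointwise concentration for the comparator). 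What the paper's route buys is that everything is handled by a single two-sided uniform-convergence event over the finite class $V^r$; what yours buys is the elimination of a lemma and of the randomness in the algorithm. The only caveats are minor and shared with the paper: the net scale must satisfy $\rho\le\min(\alpha, r/2)$, which for general regular $\cH$ makes the hidden constant depend on $\alpha$ (you flag this), and is vacuous for $\cH_W$.
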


Observe that $TolRERM$ matches the known sample complexities for linear classifiers found in \cite{Omar19} and \cite{Urner22}. However, it enjoys the advantage of being simpler (as it is essentially an empirical risk minimization algorithm) and a \textit{proper} learning algorithm (as it outputs a linear classifier).

\paragraph{Beyond regular hypothesis classes:} It turns out that Algorithm \ref{alg:estimate} has bounded sample complexity for \textit{any} hypothesis class with finite robust VC-dimension for balls (see Appendix \ref{sec:proof_extension} for a full description).
Thus, Algorithm \ref{alg:estimate} can alternatively be thought of as a reduction from the sample complexity for learning robust classifiers over arbitrary robustness regions to the sample complexity for balls of fixed radii. This is expressed in the following result (proved in Appendix \ref{sec:proof_extension}). 

\begin{theorem}\label{thm:upper_bound_general}
Let $\cH$ be any hypothesis class with maximal adversarial VC dimension $v_{ball}$, and let $U$ be any set of robustness regions. Then $TolRERM$ tolerantly PAC-learns $(\cH, U)$ with tolerant sample complexity $m(\epsilon, \delta, \gamma)  = O\left( \frac{v_{ball}d\log \frac{dD}{\epsilon\gamma\delta}}{\epsilon^2}\right),$ where $D$ denotes the maximum $\ell_2$ diameter of any robustness region, $U_x$. 
\end{theorem}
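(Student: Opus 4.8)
The plan is to mirror the (not-yet-shown) proof of Theorem~\ref{thm:upper_bound} but replace the place where regularity is used with a hypothesis about the robust VC dimension over balls. The algorithm $TolRERM$ samples a radius $r$ uniformly from $[\frac{\epsilon\delta\gamma}{7},\gamma]$, draws $S\sim\cD^n$, and outputs $\hat h = \cO_{U,\cH}(S,r) = \argmin_{h\in\cH}\ell_{U^r}(h,S)$. The key point is that $\hat h$ is an exact empirical risk minimizer for the loss $\ell_{U^r}$, and the class of robust loss functions $\{\ell_{U^r}(h,\cdot): h\in\cH\}$ has bounded complexity: its (adversarial) VC dimension for robustness regions that are unions of $r$-inflations is controlled. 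Concretely, $U^r_x = \{x' : \|x'-U_x\|\le r\}$, and the loss $\ell_{U^r}(h,(x,y))$ checks whether $h$ errs somewhere on $U^r_x$. The first step is therefore to invoke uniform convergence for the robust loss $\ell_{U^r}$: with $m(\epsilon,\delta,\gamma)$ samples as stated (the $v_{\mathrm{ball}}$ replaces $v$, and the extra factors of $d$ and $\log(dD/\epsilon\gamma\delta)$ absorb the discretization and the radius-sampling union bound), we get that for all $h\in\cH$, $|\ell_{U^r}(h,S)-\ell_{U^r}(h,\cD)|\le\epsilon/c$ uniformly, with high probability, where $v_{\mathrm{ball}}$ is the quantity that bounds the VC dimension of this loss family (here is where one needs to argue that inflating arbitrary regions by a ball of radius $r$ yields a loss class whose complexity is governed by $v_{\mathrm{ball}}$, essentially because $U^r_x$ is a union of balls $\bigcup_{u\in U_x}B(u,r)$ and so $h$ is robust at $x$ w.r.t.\ $U^r$ iff it is robust at every $u\in U_x$ w.r.t.\ balls of radius $r$ — this reduces the adversarial shattering question for $U^r$ to that for balls of radius $r$).

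The second step is the ``sandwiching'' argument that converts uniform convergence against $\ell_{U^r}$ into the tolerant guarantee, which compares $\ell_U(\hat h,\cD)$ against $\min_h\ell_{U^\gamma}(h,\cD)$. On one side, since $r\le\gamma$ we have $U_x\subseteq U^r_x\subseteq U^\gamma_x$, hence $\ell_U(h,\cdot)\le\ell_{U^r}(h,\cdot)\le\ell_{U^\gamma}(h,\cdot)$ pointwise, so $\ell_U(\hat h,\cD)\le\ell_{U^r}(\hat h,\cD)$, and by uniform convergence and optimality of $\hat h$, $\ell_{U^r}(\hat h,\cD)\le\ell_{U^r}(h^*,S)+\epsilon/c\le\ell_{U^r}(h^*,\cD)+2\epsilon/c$ for the $h^*$ achieving $\min_h\ell_{U^\gamma}(h,\cD)$. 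It then remains to bound $\ell_{U^r}(h^*,\cD)-\ell_{U^\gamma}(h^*,\cD)$, i.e.\ the cost of inflating by $r$ rather than $\gamma$. This is where the random choice of $r$ earns its keep: for a fixed point $x$, the set of radii $r\in[0,\gamma]$ for which $h^*$ is robust on $U^\gamma_x$ but \emph{not} on $U^r_x$ — no, rather the other way: the bad event is $\ell_{U^r}(h^*,(x,y))=1$ while $\ell_{U^\gamma}(h^*,(x,y))=0$, which cannot happen since $U^r_x\subseteq U^\gamma_x$; so in fact $\ell_{U^r}(h^*,\cdot)\le\ell_{U^\gamma}(h^*,\cdot)$ pointwise and this difference is nonpositive. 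Wait — the random $r$ is actually needed to handle the reverse direction in the \emph{other} part of the sandwich, or to handle the lower tail of $r$; the clean statement is $\ell_U(\hat h,\cD)\le\ell_{U^r}(h^*,\cD)+2\epsilon/c\le\ell_{U^\gamma}(h^*,\cD)+2\epsilon/c$, so with $c$ chosen so $2\epsilon/c\le\epsilon$ we are done, and the role of the random $r\ge\frac{\epsilon\delta\gamma}{7}$ is to keep the loss-class complexity finite (balls of radius bounded below) while the upper bound $r\le\gamma$ keeps us inside the tolerance budget — the randomization and the lower cutoff on $r$ together control a covering/discretization term that I will fold into the $\log(dD/\epsilon\gamma\delta)$ factor.

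The main obstacle is the first step: establishing that the adversarial loss class $\{\ell_{U^r}(h,\cdot):h\in\cH\}$ has VC dimension $O(v_{\mathrm{ball}}\cdot d\cdot\log(\cdots))$ for the relevant range of $r$, uniformly, and in particular that inflating arbitrary regions $U_x$ by an $r$-ball reduces the relevant shattering complexity to that of $\cH$ against fixed-radius balls (the quantity $v_{\mathrm{ball}}$), rather than blowing up with the geometry of the $U_x$'s. The key structural observation is $U^r_x=\bigcup_{u\in U_x}B(u,r)$, so $\ell_{U^r}(h,(x,y))=\max_{u\in U_x}\ell_{B(\cdot,r)}(h,(u,y))$; robustness against the inflated region is a \emph{conjunction over $u\in U_x$} of robustness-against-an-$r$-ball events, and a conjunction of events from a class of VC dimension $v_{\mathrm{ball}}$ (indexed by $h$, for each fixed $u$) still has VC dimension $O(v_{\mathrm{ball}})$ since the same $h$ indexes all of them — formally the dual shattering / composition bound. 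Once that is in hand, plugging into the standard agnostic uniform-convergence bound (Theorem~6.8-style, or its robust analog from \citet{Omar19}), together with a union bound over a suitably fine net of radii $r$ of size $O(\mathrm{poly}(D/(\epsilon\gamma\delta)))$ and the observation that $\ell_{U^r}$ varies monotonically and controllably in $r$, yields the stated sample complexity. I would write the $r$-net argument carefully: discretize $[\frac{\epsilon\delta\gamma}{7},\gamma]$ at scale comparable to $\frac{\epsilon\gamma\delta}{\mathrm{poly}}$, show the loss changes by at most $\epsilon/c$ between adjacent grid radii using that an extra inflation by $\eta$ only adds mass where $h^*$'s decision boundary passes within $\eta$ of $\partial U^r_x$, and union-bound uniform convergence over the $\log$-many grid points — this is the step I expect to consume most of the write-up.
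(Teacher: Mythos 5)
There is a genuine gap, and it sits exactly where you predicted it would: the claim that the loss class $\{\ell_{U^r}(h,\cdot):h\in\cH\}$ has VC dimension $O(v_{\mathrm{ball}})$ because robustness on $U^r_x=\bigcup_{u\in U_x}B(u,r)$ is ``a conjunction over $u\in U_x$ of robustness-against-an-$r$-ball events, all indexed by the same $h$.'' That composition principle is false: a conjunction of $k$ events drawn from a class of VC dimension $v$ (even when indexed by a single common parameter) generally costs a factor growing with $k$ (the standard bound is $O(vk\log k)$), and for an \emph{infinite} conjunction --- which is what an arbitrary $U_x$ gives you --- there is no bound at all. Indeed, if your composition claim were true, the robust VC dimension of any class against arbitrary perturbation sets would be controlled by its VC dimension against single balls, which would contradict the known separations between robust and standard VC dimension that motivate this entire line of work. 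The paper's proof cannot and does not invoke uniform convergence for $\ell_{U^r}$ directly. Instead it builds a finite surrogate family $V^r$ (Lemma \ref{lem:v_construct_new}): cover each $U^{r-\alpha}_x$ by $k=O((D/(\epsilon\delta\gamma))^d)$ balls of radius $\alpha/2$, so that $\ell_{U^{r-\alpha}}\leq\ell_{V^r}\leq\ell_{U^r}$ pointwise, and then applies the $k$-union version of Proposition \ref{prop:finite-RVC} to get VC dimension $O(v_{\mathrm{ball}}\log(v_{\mathrm{ball}}k))$ --- this $\log k$ is precisely the $d\log\frac{dD}{\epsilon\gamma\delta}$ factor in the sample complexity, not a discretization of the radius.

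This leads to your second gap: the role of the random $r$. You vacillate and ultimately assign it to ``a covering/discretization term'' over a net of radii, but no net over radii appears in the paper. The randomization is needed because the surrogate sandwich is $\ell_{U^{r-\alpha}}\leq\ell_{V^r}\leq\ell_{U^r}$, so after transferring the ERM guarantee through $V^r$ you end up comparing $\min_h\ell_{U^r}(h,S)$ against $\ell_{U^{r-\alpha}}(h^*,S)$, and you must argue that shrinking the inflation from $r$ to $r-\alpha$ does not drop the optimal empirical loss by much. That is Lemma \ref{lem:r_works}: averaging $OPT^r_S-OPT^{r-\alpha}_S$ over $r\sim[\alpha,\gamma]$ telescopes to at most $\frac{\alpha}{\gamma-\alpha}\leq\frac{\delta\epsilon}{6}$, and Markov gives $OPT^r_S\leq OPT^{r-\alpha}_S+\frac{\epsilon}{3}$ with probability $1-\frac{\delta}{2}$. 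Without this step your chain of inequalities breaks at exactly the point where you wave at it. Your high-level outline (sandwich $U\subseteq U^r\subseteq U^\gamma$, ERM plus uniform convergence, reduce complexity to $v_{\mathrm{ball}}$ via the union-of-balls structure) is the right skeleton, but the two mechanisms that make it work --- the finite ball cover of each perturbation set with the $\log k$ robust-VC overhead, and the random-radius averaging lemma --- are missing or misattributed.
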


\subsection{Proof of Theorem \ref{thm:upper_bound}}

We begin by showing that randomly choosing $r$ allows the optimal empirical loss $U^r$ to change relatively smoothly with respect to $r.$
\begin{lem}\label{lem:r_works}
For $r \in [0, \gamma]$, let $OPT_S^r = \min_{h \in H} \ell_{U^r}(h, S)$. Then with probability at least $1 - \frac{\delta}{2}$ over $r \sim [\frac{\epsilon\delta\gamma}{7}, \gamma]$, $OPT_S^r \leq OPT_S^{r -\frac{\epsilon\delta\gamma}{7}} + \frac{\epsilon}{3}.$
\end{lem}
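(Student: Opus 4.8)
The plan is to fix the sample $S$ and treat $g(r):=OPT_S^r$ as a deterministic function of $r$ on $[0,\gamma]$, then exploit two structural facts about it. First, $g$ is \emph{monotone non-decreasing}: since $U_x^{r}\subseteq U_x^{r'}$ whenever $r\le r'$, we have $\ell_{U^r}(h,(x,y))\le \ell_{U^{r'}}(h,(x,y))$ for every $h$ and every labeled point, hence $\ell_{U^r}(h,S)\le \ell_{U^{r'}}(h,S)$ pointwise in $h$, and this inequality is preserved by taking the minimum over $h\in\cH$. Second, $g$ takes values in $[0,1]$, being an average of $0/1$ losses. Writing $\eta:=\tfrac{\epsilon\delta\gamma}{7}$, the lemma is exactly the statement that the ``bad set'' $B:=\{r\in[\eta,\gamma]:\ g(r)-g(r-\eta)>\epsilon/3\}$ has Lebesgue measure at most $\tfrac{\delta}{2}(\gamma-\eta)$, i.e. $\Pr_{r\sim[\eta,\gamma]}[r\in B]\le \delta/2$.

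To bound $|B|$ I would run a telescoping-budget argument along arithmetic progressions of step $\eta$. Fix an offset $t\in[0,\eta)$ and look at the grid points $t,\,t+\eta,\,t+2\eta,\dots,\,t+K\eta$ lying in $[0,\gamma]$. Along this chain the increments $g(t+j\eta)-g(t+(j-1)\eta)$ are non-negative and telescope to $g(t+K\eta)-g(t)\le 1$, so at most $3/\epsilon$ of them can exceed $\epsilon/3$. But a grid point $t+j\eta$ (with $j\ge1$) lies in $B$ precisely when $g(t+j\eta)-g\big(t+(j-1)\eta\big)>\epsilon/3$, since $(t+j\eta)-\eta = t+(j-1)\eta$; hence at most $3/\epsilon$ of the grid points of each offset-$t$ chain lie in $B$. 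Now integrate over $t\in[0,\eta)$ and exchange sum and integral: each $r\in B$ is the grid point $t+j\eta$ for exactly one offset, namely $t=r-\lfloor r/\eta\rfloor\eta$ and $j=\lfloor r/\eta\rfloor\ge1$ (and then $r-\eta\ge0$ automatically), so
\[
|B| \;=\; \int_0^{\eta}\#\{\text{bad grid points at offset }t\}\,dt \;\le\; \eta\cdot\frac{3}{\epsilon} \;=\; \frac{3\eta}{\epsilon} \;=\; \frac{3\delta\gamma}{7}.
\]

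Finally I would divide by the length of the sampling interval. Using $\epsilon,\delta\le 1$ we have $\eta=\tfrac{\epsilon\delta\gamma}{7}\le \tfrac{\gamma}{7}$, so $\gamma-\eta\ge \tfrac{6\gamma}{7}$, and therefore
\[
\Pr_{r\sim[\eta,\gamma]}[r\in B] \;=\; \frac{|B|}{\gamma-\eta} \;\le\; \frac{3\delta\gamma/7}{6\gamma/7} \;=\; \frac{\delta}{2},
\]
which is the claim. I do not expect a genuine obstacle here; the only point requiring care is the bookkeeping in the exchange of sum and integral — verifying that the offset interval $[0,\eta)$, the per-chain budget of $3/\epsilon$, and the correspondence ``$r\in B \leftrightarrow$ a single bad grid increment'' line up so that the constants come out to exactly $\delta/2$. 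It is worth stating the monotonicity of $r\mapsto OPT_S^r$ explicitly, as the whole argument rests on it.
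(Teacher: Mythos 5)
Your proof is correct. It rests on the same two structural facts the paper uses — monotonicity of $r\mapsto OPT_S^r$ (from $U_x^r\subseteq U_x^{r'}$ for $r\le r'$) and its boundedness in $[0,1]$ — but the execution differs. The paper works with the first moment: it computes $\E_r[OPT_S^r-OPT_S^{r-\alpha}]$ by shifting the integral, so that the telescoping happens inside the integral and leaves only the boundary terms $\frac{1}{\gamma-\alpha}\bigl(\int_{\gamma-\alpha}^{\gamma}-\int_0^{\alpha}\bigr)OPT_S^r\,dr\le\frac{\alpha}{\gamma-\alpha}\le\frac{\delta\epsilon}{6}$, and then applies Markov's inequality (valid since the increment is nonnegative by monotonicity). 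You instead bound the measure of the bad set directly, by decomposing $[\eta,\gamma]$ into arithmetic chains of step $\eta$, telescoping along each chain to cap the number of large increments at $3/\epsilon$, and integrating over offsets; your bookkeeping (unique representation $r=t+j\eta$ with $j\ge 1$, per-chain budget, $\gamma-\eta\ge 6\gamma/7$) is right and lands on exactly $\delta/2$. The two arguments are really the discrete and continuous faces of the same random-shift idea; the paper's version is shorter and avoids the Fubini exchange, while yours makes the "few bad increments per chain" intuition explicit and sidesteps Markov entirely. Your closing remark about stating monotonicity explicitly is well taken — the paper needs it too (both for Markov's inequality and for measurability), and only gestures at it in a technical aside.
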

\textit{Proof. }Let $\alpha = \frac{\epsilon\delta\gamma}{7}.$ Our goal is to show that $OPT_S^r - OPT_S^{r-\alpha}$ is likely to be small. Our strategy is to bound the expected value of $OPT_S^r - OPT_S^{r-\alpha}$ and then apply Markov's inequality. As a technical note, the function $r \mapsto OPT_S^r$ is monotonic and bounded, and consequently measurable, which ensures that our expectations are well defined. To this end, we have,
\begin{equation*}
\begin{split}
\Ev[OPT_S^r &- OPT_S^{r-\alpha}] = \Ev[OPT_S^r] - \Ev[OPT_S^{r-\alpha}] \\
&= \frac{1}{\gamma - \alpha}\left(\int_\alpha^\gamma OPT_S^rdr - \int_\alpha^\gamma OPT_S^{r-\alpha}dr \right) \\
&= \frac{1}{\gamma - \alpha}\left(\int_\alpha^\gamma OPT_S^rdr - \int_0^{\gamma-\alpha} OPT_S^rdr \right) \\
&= \frac{1}{\gamma - \alpha}\left(\int_{\gamma-\alpha}^\gamma OPT_S^rdr - \int_0^{\alpha} OPT_S^rdr \right) \\
&\leq \frac{\alpha}{\gamma - \alpha} \\
&= \frac{\delta\epsilon\gamma}{7\gamma - \delta\epsilon\gamma} \\
&\leq \frac{\delta\epsilon}{6},
\end{split}
\end{equation*}
since $\epsilon, \delta \leq 1$. Applying Markov's inequality, with probability at least $1 - \frac{\delta}{2}$, $OPT_S^r - OPT_S^{r-\alpha} \leq \frac{\epsilon}{3}$. $\square$.

Next, we construct a set of robustness regions $V^r$ that have similar robust loss to $U^r$ and are also finite.

\begin{lem}\label{lem:v_construct}
Suppose that $\cH$ is $\gamma$-regular. For all $r \in [\frac{\epsilon\delta\gamma}{7}, \gamma]$, there exists a set of robustness regions $V^r = \{V_x^r: x \in \reals^d\}$ satisfying the following two properties.  
\begin{enumerate}
	\item $|V_x^r| = O\left(\left(\frac{D}{\epsilon\delta\gamma}\right)^d\right)$, where $D$ denotes the maximum diameter of $U_x$. 
	\item Let $\alpha = \frac{\epsilon\delta\gamma}{7}$. For all labeled points $(x,y)$ and for all classifiers $h \in \cH$, $$\ell_{U^{r - \alpha}}(h, (x,y)) \leq \ell_{V^r}(h, (x,y)) \leq \ell_{U^r}(h, (x,y)).$$
\end{enumerate}
\end{lem}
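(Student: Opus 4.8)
The plan is to take each $V_x^r$ to be a maximal $\alpha$-separated subset of $U_x^r$, where $\alpha = \frac{\epsilon\delta\gamma}{7}$: choose points of $U_x^r$ greedily, keeping every newly added point at distance more than $\alpha$ from all previously chosen ones, until no further point of $U_x^r$ can be added. This set has two features we will use. First, it is an \emph{$\alpha$-cover} of $U_x^r$ — every point of $U_x^r$ lies within distance $\alpha$ of some point of $V_x^r$, since otherwise that point could have been added, contradicting maximality. Second, the balls $B(v,\alpha/2)$ for $v \in V_x^r$ are pairwise disjoint. And by construction $V_x^r \subseteq U_x^r$.

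Property 1 (size) then follows from the standard volume–packing argument. Since $U_x$ has $\ell_2$-diameter at most $D$ and $r \le \gamma$, the set $U_x^r$ has diameter at most $D + 2\gamma$, so the disjoint balls $B(v,\alpha/2)$ all lie inside a single Euclidean ball of radius $D + 2\gamma + \alpha/2$. Comparing volumes yields $|V_x^r| \le \left(\frac{2D + 4\gamma + \alpha}{\alpha}\right)^d = O\!\left(\left(\frac{D}{\epsilon\delta\gamma}\right)^d\right)$ in the regime $D = \Omega(\gamma)$ of interest (in general the bound reads $O((\frac{D+\gamma}{\epsilon\delta\gamma})^d)$). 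In particular $V_x^r$ is finite, hence a legitimate finite robustness region.

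For Property 2, the inequality $\ell_{V^r}(h,(x,y)) \le \ell_{U^r}(h,(x,y))$ is immediate: since $V_x^r \subseteq U_x^r$, any point of $V_x^r$ witnessing non-robustness of $h$ also lies in $U_x^r$. The substance is the other inequality. Suppose $\ell_{U^{r-\alpha}}(h,(x,y)) = 1$, so some $x' \in U_x^{r-\alpha}$ has $h(x') = y' \ne y$. Because $\cH$ is $\gamma$-regular and $\alpha \le \gamma$, it is $\alpha$-regular, so there is a ball $B(c,\alpha) \ni x'$ on which $h$ is constantly $y'$. By the triangle inequality $\|c - U_x\| \le \|c - x'\| + \|x' - U_x\| \le \alpha + (r-\alpha) = r$, so $c \in U_x^r$; since $V_x^r$ is an $\alpha$-cover of $U_x^r$, there is $v \in V_x^r$ with $\|v - c\| \le \alpha$, i.e. $v \in B(c,\alpha)$, whence $h(v) = y' \ne y$ and $\ell_{V^r}(h,(x,y)) = 1$.

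The crux — and the reason the construction must be exactly this one — is the last step: the non-robustness witness has to be transported from $U_x^{r-\alpha}$ into the finite set $V_x^r$ while remaining inside $U_x^r$. Two design choices must interlock for this to work: the net points are drawn from $U_x^r$ itself (so that $\ell_{V^r} \le \ell_{U^r}$ survives), and the net radius is taken equal to the regularity radius $\alpha$ (so the monochromatic ball guaranteed by regularity is just large enough to capture a net point, while the $\alpha$ of slack between $r-\alpha$ and $r$ is exactly what keeps that ball's center in $U_x^r$). Everything else is routine bookkeeping: that $U_x^r$ is bounded so the packing bound applies and a maximal $\alpha$-separated subset exists and is finite, and that $\alpha$-regularity follows from $\gamma$-regularity when $\alpha \le \gamma$ (a radius-$\gamma$ ball containing $x$ contains a radius-$\alpha$ ball containing $x$).
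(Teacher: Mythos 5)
Your proof is correct and follows essentially the same route as the paper's: build a finite $\alpha$-scale net of $U_x^r$, bound its size by a packing/covering argument, and use $\gamma$-regularity plus the triangle inequality to transport a non-robustness witness from $U_x^{r-\alpha}$ to a net point inside $U_x^r$. The only (cosmetic) difference is that you take a maximal $\alpha$-separated subset of $U_x^r$ while the paper intersects the centers of a minimal $\alpha/2$-ball cover with $U_x^r$; your variant is slightly cleaner since the net points lie in $U_x^r$ by construction.
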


\begin{proof}
For any $x \in \reals^d$, we will show how to construct $V_x$ so that it satisfies the two conditions above.

Observe that $U_x^r$ is closed and bounded as it is a union of closed balls of radius $r$. Since each $U_x$ has diameter at most $D$, this means that $U_x^r$ is compact. Thus, there exists a finite set of balls of radius $\alpha/2$ that cover $U_x^r$. Note that these balls are \textit{not} necessarily contained within $U_x^r$ -- only that $U_x^r$ is a subset of their union. Let $C_x$ denote the set of all centers of the smallest such cover. We claims that $V_x = C_x \cap U_x^r$ suffices.

First, $|C_x| \leq O\left((\frac{D}{\alpha})^d\right)$ because any ball of diameter $D$ can be covered by $O\left((\frac{D}{\alpha})^d\right)$ balls of radius $\alpha/2$, and $U_x^r$ is a subset of a ball of diameter $D+2r$. This implies that the first condition holds.

Second, pick any labeled point $(x,y)$ and any classifier $h \in \cH$. If $\ell_{V^r}(h, (x,y)) = 1$, then we immediately have $\ell_{U^r}(h, (x,y) = 1$ since $V^r \subseteq U^r$. This implies that $\ell_{V^r}(h, (x,y)) \leq \ell_{U^r}(h, (x,y)$ giving the second half of the second condition.   

If $\ell_{U^{r-\alpha}}(h, (x,y)) = 1$, then there exists $x' \in U_x^{r-\alpha}$ such that $h(x') \neq y$. It follows that since $h$ is $\gamma$-regular, $h$ must also be $\alpha$-regular (as $\alpha < \gamma$). This means that there exists a ball $B$ of radius $\alpha/2$ containing $x'$ such that $h$ does not output $y$ for any point in $B$. 

By the triangle inequality, $B \subseteq U_x^r$, and since $C_x$ covers $U_x^r$, it follows that there exists $x^* \in C_x \cap B$. By definition, this also means $x^* \in V_x^r$. However, by the definition of $B$, we must have $h(x^*) \neq y$, and this means that $\ell_{V_x^r}(h, (x,y)) = 1$. Since $(x,y)$ was arbitrary, this proves the second half of the second condition. 
\end{proof}

We are now prepared to prove Theorem \ref{thm:upper_bound}.

\begin{proof}
(\textbf{Theorem \ref{thm:upper_bound}}) Let $\alpha = \frac{\epsilon\delta\gamma}{7}$. For all $s > 0$, let $h^s \in \cH$ denote any fixed choice of classifier with minimal empirical loss with respect to $U^s$. That is, $$h^s = \argmin_{h \in \cH} \ell_{U^s}(h, S).$$ It suffices to show that with probability at least $1-\delta$ over $S \sim \mathcal{D}^n$ and $r \sim \left[\alpha, \gamma\right]$, $$\ell_U(h^r, \mathcal{D}) \leq \min_{h \in \mathcal{H}}\ell_{U^\gamma}(h, \mathcal{D}) + \epsilon.$$ Let $h^* = \argmin_{h \in \mathcal{H}} \ell_{U^\gamma}(h, \mathcal{D})$, and let $V^r$ be the set of robustness regions defined in Lemma \ref{lem:v_construct}. Then by Lemma \ref{lem:v_construct} and the fact that $r \leq \gamma$, 
\begin{equation}\label{eqn:getting_to_V}
\ell_{U^\gamma}(h^*, \mathcal{D}) \geq \ell_{U^r}(h^*, \mathcal{D}) \geq \ell_{V^r}(h^*, \mathcal{D}).
\end{equation}
Next, since $|V_x| = O\left(\left(\frac{D}{\epsilon\delta\gamma}\right)^d\right)$, Proposition \ref{prop:finite-RVC} (proved in the Appendix \ref{app: robust vc bound}) implies that the Robust VC dimension of $\cH$ with respect to $V_x$ is at most $O\left(vd\log \frac{Dv}{\epsilon\delta\gamma} \right)$, where $v$ denotes the VC dimension of $\cH$.

Because $S$ is independent from $r$, there exists a constant $C$ such that if $n \geq C\frac{vd\log \frac{Dv}{\epsilon\delta\gamma} +\log\frac{1}{\delta}}{\epsilon^2}$, then classical connections with uniform convergence \cite{vapnik1974theory} imply that with probability at least $1 - \frac{\delta}{2}$ over $S \sim \cD^n$, for all $h \in \cH$, $|\ell_{V^r}(h, S) - \ell_{V^r}(h, \cD)| \leq \frac{\epsilon}{3}.$ This implies, 
\begin{equation}\label{eqn:first_uniform}
\ell_{V^r}(h^*, \cD)  \geq \ell_{V^r}(h^*, S) - \frac{\epsilon}{3}.
\end{equation}

Then, using the fact that $\ell_{U_x^{r-\alpha}} \geq \ell_{V_x^r}$ (Lemma \ref{lem:v_construct}) along with the definition of $h^{r-\alpha}$, we have
\begin{equation}\label{eqn:getting_to_the_bridge}
\ell_{V^r}(h^*, S) - \frac{\epsilon}{3} \geq \ell_{U^{r-\alpha}}(h^*, S) - \frac{\epsilon}{3} \geq \ell_{U^{r-\alpha}}(h^{r-\alpha}, S) - \frac{\epsilon}{3}.
\end{equation}
Applying Lemma \ref{lem:r_works}, we have with probability at least $1- \frac{\delta}{2}$ over $r \sim [\alpha, \gamma]$,
\begin{equation}\label{eqn:the_bridge}
\ell_{U^{r-\alpha}}(h^{r-\alpha}, S) - \frac{\epsilon}{3} \geq \ell_{U^r}(h^r, S) - \frac{2\epsilon}{3}.
\end{equation}
Using $V_x^r \subset U_x^r$ and uniform convergence over $V_x$ one more time, we get that 
\begin{equation}\label{eqn:second_uniform}
\ell_{U^r}(h^r, S) - \frac{2\epsilon}{3} \geq \ell_{V^r}(h^r, S) - \frac{2\epsilon}{3} \geq \ell_{V^r}(h^r, \mathcal{D}) - \epsilon.
\end{equation}
Finally, using Lemma \ref{lem:v_construct} along with the fact that $U_x \subset U_x^{r-\alpha}$, we have
\begin{equation}\label{eqn:lets_get_it}
\ell_{V^r}(h^r, \mathcal{D}) - \epsilon \geq \ell_{U^{r-\alpha}}(h^r, \mathcal{D}) - \epsilon \geq \ell_U(h^r, \mathcal{D}) - \epsilon.
\end{equation}
Combining Equations \ref{eqn:getting_to_V}, \ref{eqn:first_uniform}, \ref{eqn:getting_to_the_bridge}, \ref{eqn:the_bridge}, \ref{eqn:second_uniform}, and \ref{eqn:lets_get_it} with the transitive property, completes the proof, as a simple union bound shows that they all hold simultaneously with probability at least $1 - \delta$, as desired.

\end{proof}

\section*{Acknowledgments}

Robi Bhattacharjee thanks NSF under CNS 1804829 for research support.



\bibliography{main.bib}
\newpage
\appendix


\section{Details for the proof of Theorem \ref{thm:lower_bound}}\label{sec:lower_bound_proof}

\begin{proof}
We want to show that for any $m \in \mathbb{N}$, any learner on $m$ samples must fail with constant probability. Toward this end, set $M = \binom{3m}{m}$, and let $Z^{(M)}_1, Z^{(M)}_2, \dots, Z^{(M)}_M$ be subsets of $\R^d$ as described by Lemma \ref{lem:finding_shatter} (we will drop the superscript in what follows). Let $\mathcal{M}$ denote the set of all subsets of $\{1, \dots, 3m\}$ with exactly $m$ elements. Associate with each $Z_i$ a unique element of $\mathcal{M}$, thus allowing us to rename our subsets as $\{Z_T: T \in \mathcal{M}\}.$ We will now construct a set of robustness regions $U$ from these subsets. For $1 \leq i \leq 3m$, define $$U_{x_i} = \cup_{T: i \in T} Z_T,$$ where $x_i$ is an arbitrary point inside $U_{x_i}$. Note this is well-defined since the $Z^{(M)}$ are mutually disjoint.

By Lemma \ref{lem:finding_shatter}, it follows that if all $x_i$ are given a label of $-1$, then any classifier $h \in \cH_W$ satisfies that $h(z) = 1$ for some subset $T$ and some $z \in Z_T$. However, this will imply that $h$ lacks robustness on all $x \in \{x_i: i \in T\}$, meaning that there are at least $m$ points among $\{x_1, \dots, x_{3m}\}$ where $h$ has robust loss $1$. Furthermore, the second part of Lemma \ref{lem:finding_shatter} implies that for any $T \in \mathcal{M}$, there exists a classifier $h_T$ for which $h_S$ is $-1$ over all $Z_{T'}$ for $T' \neq T$. This implies that $h_T$ is robust at all $x_i$ \textit{except} for $x_i$ with $i \in S$. 

With these observations, we are now prepared to show that for any learner $L$, there exists a distribution $D$ for which $L$ has large expected robust loss. To do this, we use a standard lower bound technique found in \cite{ml_book} that was adapted to the robust setting in \cite{Omar19}. 

 The idea will be to pick $D$ to be the uniform distribution over a random subset of $2m$ points in $\{x_1, \dots, x_{3m}\}$. We will then argue that because $L$ only has access to $m$ points from $D$, it won't be able to distinguish which subset $D$ corresponds to, and this will lead to a large expected loss.

To this end, for any $T \in \cM$, let $D_T$ be the data distribution over $\R^d \times \{\pm 1\}$ where $x$ is chosen at uniform from $\{x_i: i \notin T\}$ and $y$ is always $-1$. We may assume without loss of generality that our learning algorithm, $L$, always outputs a classifier among the set $\{h_T: T \in \cM\}$. This is because Lemma \ref{lem:finding_shatter} implies that any classifier in $h \in \cH_W$ has robust loss that is at least as bad some $h_T$ (namely, if the decision boundary of $h$ crosses $Z_T$). 

Next, let $T, T' \in \cM$ be arbitrary. By definition, $h_T$ lacks robustness on all $x_i$ with $i \in T$, and is perfectly accurate and robust at all other points. It follows that among the $2m$ points in the support of $D_{T'}$, there are $m - |T \cap T'|$ where $h_T$ lacks robustness, implying the the loss of classifier $h_T$ with respect to distribution $D_{T'}$ is $\frac{1}{2} - \frac{|T \cap T'|}{2m}$. Note that this implies that $h_T$ has $0$ robust loss over $D_T$ (thus meeting the first stipulation of Theorem \ref{thm:lower_bound}). 

Finally, we bound the expected loss of the learner $L$ with respect to a uniformly random choice of $D_T$. Let $\cM$ also denote the uniform distribution over itself, and let $\cU$ denote the uniform distribution over $\{1, 2, 3, \dots, 3m\}$. Taking expectations over $T \sim \cM$ and $S \sim D_T^m$, and letting $h_{L(S)}$ denote the classifier learned by $L$, we have that 
\begin{equation*}
\begin{split}
\E_{T \sim \cM} \E_{S \sim D_T^m} \left[\ell_U(h_{L(S)}, D_T)\right] &= \E_{S \sim \cU^m} \E_{T \sim (\cM | S)} \left[\ell_U(h_{L(S)}, D_T)\right] \\
&= \E_{S \sim \cU^m} \E_{T \sim \{T': T' \in \cM, S \cap T' = \emptyset\}}\left[ \frac{1}{2} - \frac{|T \cap L(S)|}{2m} \right].\\
\end{split}
\end{equation*}
To bound the inner expectation, observe that since $|S| = m$, $T'$ has a conditional distribution that is an arbitrary (at uniform) subset of at least $2m$ indices. Since $L(S)$ is fixed, it follows that the probability that any element in $L(S)$ is an element of $T'$ is at most $\frac{1}{2}$, meaning that the expected value of $|T \cap L(S)|$ is at most $\frac{|L(S)|}{2} = \frac{m}{2}$. Substituting this, we have that 
\begin{equation*}
\begin{split}
\E_{T \sim \cM} \E_{S \sim D_T^m} \left[\ell_U(h_{L(S)}, D_T)\right] \geq \E_{S \sim \cU^m} \E_{T \sim \{T': T' \in \cM, S \cap T' = \emptyset\}}\left[ \frac{1}{2} - \frac{m}{4m} \right] = \frac{1}{4}.\\
\end{split}
\end{equation*}
By Markov's inequality, any random variable between $0$ and $1$ with expectation $\frac{1}{4}$ is strictly larger than $\frac{1}{8}$ with probability at least $\frac{1}{7}$. Since the loss above is bounded between $0$ and $1$, it follows that $\Pr_{T \sim \cM}\Pr_{S \sim D_T} [\ell_U(h_{L(S)}, D) > \frac{1}{8}] \geq \frac{1}{7}$. Thus, for some $D = D_T$, the desired claim holds, finish the proof. 
\end{proof}

\section{Sample Oracle Lower Bounds}\label{app: lower bound}
\begin{figure*}[h!]
\centering
		\includegraphics[width=.6\linewidth]{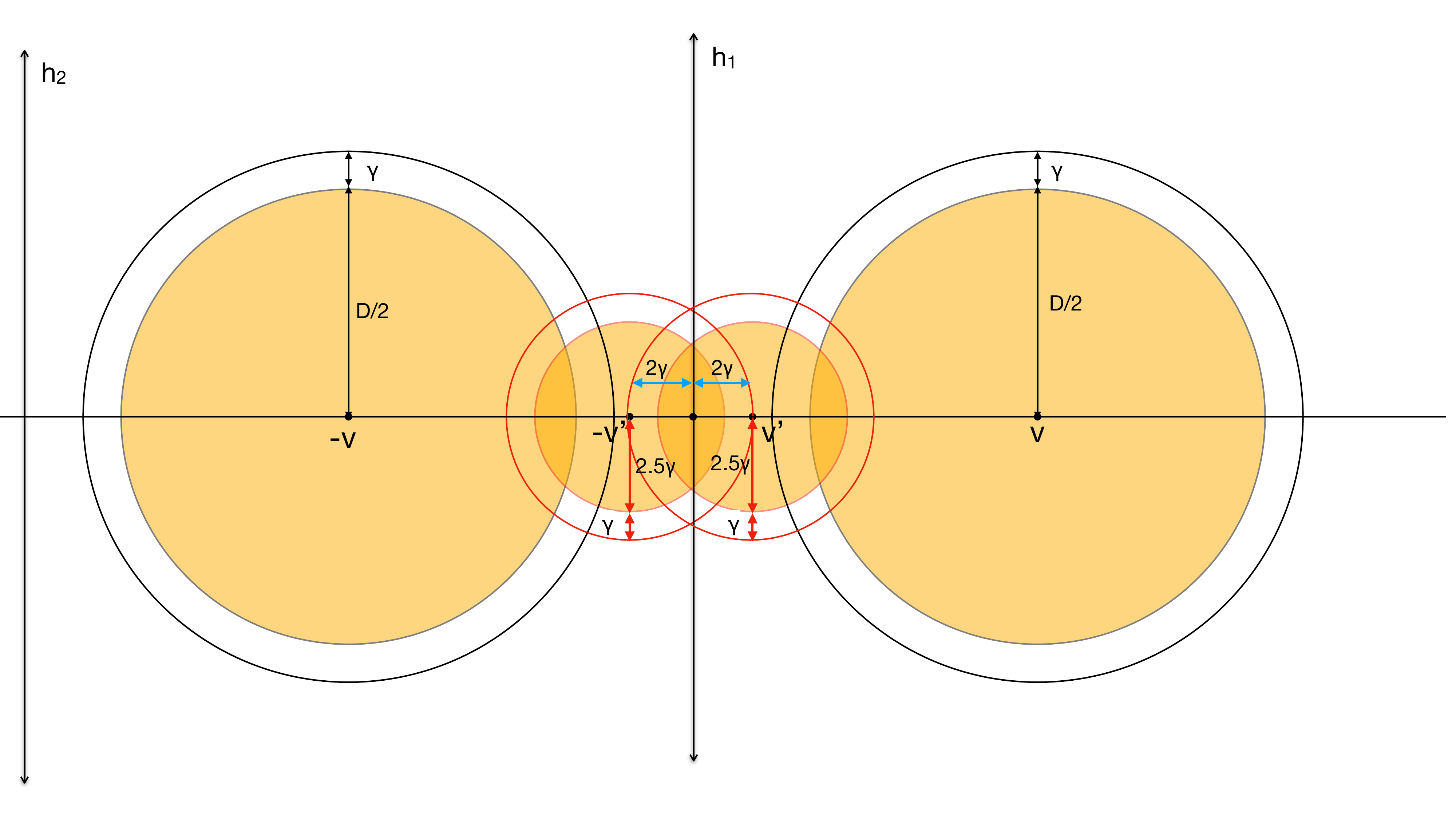}
		\label{fig:LB}
   \caption{Illustration for the sampling oracle lower bound in \propref{prop: lowerbound} in $\R^2$.}
	\label{fig:illustration}
\end{figure*}

We now show a lower bound on the number of oracle calls required for tolerant learning in \citet{Urner22}'s sample oracle model. We first recall the model itself for completeness, focusing on the case of $(\mathbb{R}^d,\ell_2)$ endowed with the standard Lebesgue measure for simplicity.
\begin{defn}[Sampling Oracle {\cite{Urner22}}]
Let $U: \mathbb{R}^d \to P(\mathbb{R}^d)$ be any perturbation function such that $U(x)$ has finite Lebesgue measure for all $x \in \mathbb{R}^d$. The sampling oracle $\mathcal{O}_U$ inputs any $x \in \mathbb{R}^d$, and outputs a sample $y$ from the induced distribution on $U(x)$ under the Lebesgue measure.
\end{defn}
We prove that tolerant learning requires exponentially many calls to the sampling oracle.
\begin{customprop}{6}
For any $D>10\gamma > 0$, there exists a hypothesis class $\mathcal{H}$ and a set of robustness regions, $U$ such that the following holds. There exist constants $\epsilon$ and $\delta$, along with a data distribution $\cD$, such that for any $n > 0$, any learner $L$ that achieves $$\ell_U(L(S), \cD) \leq \min_{h \in \cH} \ell_{U^\gamma}(h, \cD) + \epsilon$$ with probability at least $1-\delta$ over $S \sim \cD^n$ must make at least $\Omega\left(\left(\frac{D}{\gamma}\right)^d\right)$ calls to the sampling oracle. 
\end{customprop}
\begin{proof}[Proof of Proposition \ref{prop: lowerbound}]

Appealing to Yao's Minimax Principle, it is enough to find a class $\mathcal{H}$ and strategy for the adversary (over valid choices of perturbation sets and data distributions) such that any deterministic learner using at most $O((\frac{D}{\gamma})^d)$ oracle calls incurs at least constant error ($\epsilon$) over the optimum in $\mathcal{H}$ with constant probability ($\delta$). 

With this in mind, fix $D_0 =D-9\gamma$, let $r = 4\gamma$, and let $e_1$ denote the first canonical basis vector in $\R^d$. Our (marginal) data distribution will consist of two points in $\R^d$ $\curlybracket{(\frac{D_0}{2}+4\gamma)e_1,-(\frac{D_0}{2}+4\gamma)e_1}$. For the ease of notation, we denote $v \coloneqq (\frac{D_0}{2}+4\gamma)e_1$. Note, $||v- (-v)||_2 = D_0 + 2r$. We now define the underlying hypothesis class $\cH$ which consists of two linear classifiers $\cH \coloneqq \curlybracket{h_{1},h_{2}}$ such that $h_1 = sgn(\langle e_1, \cdot \rangle)$ and $h_2 = (\langle e_1, \cdot \rangle - D_0 - 4\gamma)$. Note that $h_1$ is a perpendicular bisector of the line segment joining $v$ and $-v$, and $h_2$ is parallel to $h_1$ but biased to the left of $v$.

Finally, we construct two perturbation sets with bounded diameter $U$ and $V$. Fix $v' = 2\gamma e_1$.
We define balls of radius $r>0$ for any given $x \in \R^d$ as $B_2(x, r)\coloneqq$ $\curlybracket{x' \in \R^d: ||x' - x||_2 \le r}$. First, we define a perturbation $U$ and its $\gamma$-perturbed region $U^\gamma$ as follows:
\begin{gather*}
 U \coloneqq \curlybracket{U_{v},U_{-v}} \textnormal{where for any } x \in \curlybracket{v,-v}, U_{x} = B_2\paren{x, \frac{D_0}{2}},\\
 U^{\gamma}\coloneqq \curlybracket{U_{v}^\gamma,U_{-v}^\gamma} \textnormal{where for any } x \in \curlybracket{v,-v}, U_{x}^{\gamma} = B_2\paren{x, \frac{D_0}{2} +\gamma}
\end{gather*}
Similarly, we define another perturbation set $V$ and its $\gamma$-perturbed region $V^{\gamma}$:
\begin{gather*}
V\coloneqq \curlybracket{V_{v},V_{-v}} \textnormal{where for any } x \in \curlybracket{v,-v}, V_{x} = U_{x}\cup B_2\paren{x',\frac{5\gamma}{2}},\\
V^{\gamma}\coloneqq \curlybracket{V_{v}^\gamma,V_{-v}^\gamma} \textnormal{where for any } x \in \curlybracket{v,-v}, V_{x}^\gamma = U_{x}^\gamma \cup B_2\paren{x',\frac{7\gamma}{2}}
\end{gather*}
where $x' = v'$ or $-v'$ if $x = v$ or $-v$ respectively. We assume that the perturbation set for $\R^d\setminus \curlybracket{v,-v}$ is null for simplicity. 
Observe that $\bigcap\limits_{x' \in \curlybracket{v,-v}} U_{x'} = \emptyset$ and so is the intersection of perturbations in $U^\gamma$.
But, we note that $\bigcap\limits_{x' \in \curlybracket{v,-v}} V_{x'} \not= \emptyset$. This entire construction is illustrated in Figure \ref{fig:illustration}.

We are now ready to describe the adversary's strategy, who chooses one of $U$ or $V$ independently with probability $1/2$, and employs a single fixed choice of data distribution $\cD$ where $\Pr[Y = -1 | -v] = \Pr[Y = 1 | v] = 1$, and the marginal distribution is uniform over $v$ and $-v$. Note that if the perturbation set is $U$, then $h_1$ is optimal as $\ell_U\paren{h_1,\cD} = 0$ whereas $\ell_U\paren{h_2,\cD} = 1/2$. On the other hand if $V$ is chosen then $h_2$ is optimal as $\ell_V\paren{h_2,\cD} = \frac{1}{2}$ and $\ell_V\paren{h_1,\cD} = 1$. The idea is to show that the learner cannot distinguish between $U$ and $V$ with high probability, and thus cannot choose the right hypothesis. We note that since the data distribution is fixed and known to the learner, we only need to consider randomness over the sample oracle---labeled samples have no effect on the bound.

More formally, we split our analysis into two cases based on whether or not the learner draws an (oracle) sample in $V^\gamma \setminus U^\gamma$. First, note that conditioned on the fact that the learner draws no such sample, by construction the posterior probability of $U$ is strictly higher than that of $V$. This means the learner's expected excess error is minimized by always outputting $h_1$ on such samples. On the other hand, if the learner observes a sample in $V^\gamma \setminus U^\gamma$, they can always achieve optimal error by outputting $h_2$. 

Since the above learning rule minimizes the learner's expected excess error, it is enough to bound the expected error of this rule:
\begin{align*}
    \mathbb{E}_{Z,S \sim \mathcal{O}_Z}[OPT_Z - \ell_{Z}(\mathcal{A}(S),\cD)] &\geq \frac{1}{2}\Pr[S \subset U^\gamma \wedge Z=V]\\
    &= \frac{1}{2}\Pr[Z=V]\Pr[S \subset U^\gamma | Z=V]\\
    &= \frac{1}{4}\Pr[S \subset U^\gamma | Z=V]
\end{align*}

The key observation is then simply to notice that $\Pr[S \subset U^\gamma | Z=V]$ is constant whenever the learner draws at most $O((\frac{D}{\gamma})^d)$ oracle samples. This follows from the fact that under the induced distribution $P_{V^\gamma}$ on $V$:
\begin{equation}
 P_{V^\gamma}(V^\gamma\setminus U^\gamma) = \frac{\mu(V^\gamma\setminus U^\gamma)}{\mu(V^{\gamma})} \le \frac{\mu(B_2(v',\frac{7\gamma}{2}))}{\mu(U^\gamma_v) + \mu(B_2(v',\frac{7\gamma}{2}))} \le \frac{(\frac{7}{2}\gamma)^d}{D_0^d}\nonumber
\end{equation}
where $\mu$ is the standard Lebesgue measure. Similarly we then have
\begin{align*}
    P_{V^\gamma}(U^\gamma) \ge 1 - \frac{(\frac{7}{2}\gamma)^d}{D_0^d} \label{eq: U}
\end{align*}
and finally that
\begin{align*}
    \Pr[S \subset U^\gamma | Z=V] \geq \paren{1 - \frac{(\frac{7}{2}\gamma)^d}{D_0^d}}^{|S|}
\end{align*}
which is at least some constant when $|S| \leq c(\frac{D_0}{\gamma})^d$ for some sufficiently small absolute constant $c<0$. Since $D_0 = D - 9\gamma$, there exists $c'$ such that this holds when $|S| \leq c'(\frac{D}{\gamma})^d$ which implies the proposition.

\end{proof}
We note that in \citet{Urner22}, the sampling oracle is defined more generally for any \textit{doubling-measure} $\mu$, that is any measure for which there exists some ``doubling-constant'' $C>0$ such that for all $x \in \mathbb{R}^d$ and $r \in \mathbb{R}^+$:
\[
0 < \mu(B(x,2r)) \leq C\mu(B(x,r)) < \infty.
\]
In this more general setting, one can prove a lower bound that scales with the doubling-constant (typically exponential in the associated doubling-dimension of the metric space) simply by appropriately increasing the concentration of measure on $U^\gamma$.

\section{Robust VC for $k$ points}\label{app: robust vc bound}
In this section, we prove that the size-$k$ perturbation sets only cost a $\log(k)$ factor over the VC dimension of the original class. To formalize this, we first recall a few basic definitions standard to the (adversarially robust) learning literature.

\begin{defn}[Robust Loss Class]
Given a hypothesis $h: \cX \to \{0,1\}$ and perturbation function $U:\cX \to P(\cX)$, let $h^\ell_U: X \times \{0,1\}$ be the function over labeled samples measuring the robust loss of $h$:
\[
h^\ell_U(x,y) = \begin{cases}
0 & \text{ if } \ \forall x' \in U(x): h(x')=y\\
1 & \text{ else.}
\end{cases}
\]
The robust loss class of $(\cX,\cH)$ is the hypothesis class over $\cX \times \{0,1\}$ given by:
\[
\mathcal{L}^U_{\cH} \coloneqq \{ h^\ell_U : h \in \cH\}.
\]
\end{defn}
We are interested in analyzing a standard complexity measure of the robust loss class called VC dimension
\begin{defn}[VC Dimension]
The VC dimension of a hypothesis class $(\cX,\cH)$ is the size of largest subset $S \subseteq \cX$ such that $\cH$ obtains all $2^{|S|}$ labelings on $S$. We say such a set is \textit{shattered} by $\cH$.
\end{defn}
We show the VC dimension of the robust loss class incurs at most $\log(k)$ blow-up over the original class.
\begin{prop}[Overhead of Robust VC]\label{prop:finite-RVC}
Let $(\cX,\cH)$ be a hypothesis class of VC-dimension $d$ and $U: \cX \to P(\cX)$ any perturbation function with support bounded by some $k \in \mathbb{N}$. Then the VC dimension of $\mathcal{L}^U_{\cH}$ is at most $O(d\log(dk))$.
\end{prop}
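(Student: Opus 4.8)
The plan is to reduce the statement to the classical Sauer--Shelah lemma applied to the original class $\cH$. The key observation is that for any labeled point $(x,y)$ the value $h^\ell_U(x,y)$ depends only on the restriction of $h$ to the finite set $U(x)$: we have $h^\ell_U(x,y)=1$ precisely when $h$ disagrees with $y$ on at least one of the (at most $k$) points of $U(x)$. Thus the \emph{robust} behavior of $h$ on a finite sample of labeled points is a function of the \emph{ordinary} behavior of $h$ on the union of the associated perturbation sets, and this is the only structural fact about $\mathcal{L}^U_{\cH}$ we will need.

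Concretely, I would suppose that $S=\{(x_1,y_1),\dots,(x_m,y_m)\}\subseteq \cX\times\{0,1\}$ is shattered by $\mathcal{L}^U_{\cH}$ and set $T=\bigcup_{j=1}^m U(x_j)\subseteq \cX$, so that $|T|\le mk$ (we may pad each $U(x_j)$ to size exactly $k$ without changing $h^\ell_U$). The map sending $h$ to the tuple $\big(h^\ell_U(x_1,y_1),\dots,h^\ell_U(x_m,y_m)\big)$ factors through the restriction map $h\mapsto h|_T$, so the number of distinct labelings of $S$ realized by $\mathcal{L}^U_{\cH}$ is at most the number of distinct labelings of $T$ realized by $\cH$. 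Since $\cH$ has VC dimension $d$, Sauer--Shelah bounds the latter by $(emk/d)^d$ whenever $mk\ge d$ (and trivially by $2^m$ otherwise, in which case $m< d$ and there is nothing to prove).

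Shattering then forces $2^m\le (emk/d)^d$, i.e. $m\le d\log_2(emk/d)\le d\log_2 m + d\log_2(ek)$. Inverting this transcendental inequality via the standard fact that $x\le a\log_2 x + b$ with $a\ge 1$ implies $x=O(a\log_2 a + b)$ gives $m=O(d\log d + d\log k)=O(d\log(dk))$, which is exactly the claimed bound on the VC dimension of $\mathcal{L}^U_{\cH}$.

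The argument is essentially self-contained, and I do not expect a genuine obstacle: the only places requiring care are the edge-case bookkeeping (when $mk<d$, or when $dk$ is too small for the logarithmic bound to be meaningful, one falls back on the trivial bound $m<d$) and the routine algebraic inversion of $2^m\le(emk/d)^d$. The conceptual content is entirely in the "factoring through $h|_T$" step, which is immediate once one unpacks the definition of the robust loss class.
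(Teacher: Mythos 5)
Your proposal is correct and follows essentially the same route as the paper's proof: both reduce the count of robust labelings of the shattered sample $S$ to ordinary labelings of $T=\bigcup_i U(x_i)$ (your ``factoring through $h|_T$'' step is exactly the paper's Claim), apply Sauer--Shelah--Perles to bound the latter by $O((km)^d)$, and invert $2^m \leq O((km)^d)$ to get $m = O(d\log(dk))$. Your write-up is if anything slightly more careful about the edge cases and the inversion of the transcendental inequality.
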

This result was also independently communicated to us by Omar Montasser. The proof of Proposition \ref{prop:finite-RVC} relies on the classical Sauer-Shelah-Perles lemma, which we recall here for completeness.
\begin{lem}[Sauer-Shelah-Perles \cite{sauer1972density,shelah1972combinatorial}]
Let $(\cX,\cH)$ be a hypothesis class of VC-dimension $d$. Then for any finite subset $S \subseteq \cX$, $\cH$ obtains at most $O(|S|^d)$ distinct labelings on $S$.
\end{lem}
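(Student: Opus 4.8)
The plan is to prove the sharp quantitative form of the lemma, namely that $\cH$ realizes at most $\sum_{i=0}^{d}\binom{|S|}{i} = O(|S|^d)$ distinct labelings on any finite $S$, via the classical double induction on $|S|$ and $d$ (Vapnik--Chervonenkis / Sauer / Shelah / Perles). Write $\pi(\cH, S)$ for the number of distinct restrictions of hypotheses in $\cH$ to $S$, and fix $S = \{x_1,\dots,x_n\}$ with $S' = S \setminus \{x_n\}$.

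First I would set up the recursion. Partition the labelings of $S$ realized by $\cH$ according to their value on $x_n$: let $\cH_1 = \cH|_{S'}$ be the set of labelings of $S'$ realized by $\cH$, so $|\cH_1| = \pi(\cH, S')$; and let $\cH_2 \subseteq \cH_1$ be the set of labelings $\sigma$ of $S'$ for which \emph{both} extensions $\sigma\cup\{x_n\mapsto 0\}$ and $\sigma\cup\{x_n\mapsto 1\}$ are realized by $\cH$ on $S$. A labeling of $S'$ that extends uniquely to a labeling of $S$ is counted once in $\pi(\cH,S)$ and a labeling that extends in two ways is counted twice, so $\pi(\cH, S) = |\cH_1| + |\cH_2|$.

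The heart of the argument is bounding the VC dimensions of $\cH_1$ and $\cH_2$. Any $B \subseteq S'$ shattered by $\cH_1$ is shattered by $\cH$, so $\mathrm{VC}(\cH_1) \le d$. For $\cH_2$ the key observation is that if $B \subseteq S'$ is shattered by $\cH_2$, then $B \cup \{x_n\}$ is shattered by $\cH$: given a target labeling of $B \cup \{x_n\}$, realize its restriction to $B$ by some $\sigma \in \cH_2$, and then by the defining property of $\cH_2$ the prescribed value on $x_n$ is attainable by an extension of $\sigma$ inside $\cH$. Since $x_n \notin B$, this forces $|B| + 1 \le d$, i.e.\ $\mathrm{VC}(\cH_2) \le d-1$. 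Applying the inductive hypothesis to $\cH_1$ on the $(n-1)$-point set $S'$ with parameter $d$, and to $\cH_2$ on $S'$ with parameter $d-1$, yields
\[
\pi(\cH, S) \le \sum_{i=0}^{d}\binom{n-1}{i} + \sum_{i=0}^{d-1}\binom{n-1}{i} = \sum_{i=0}^{d}\left(\binom{n-1}{i} + \binom{n-1}{i-1}\right) = \sum_{i=0}^{d}\binom{n}{i},
\]
by Pascal's identity (with $\binom{n-1}{-1} = 0$). The base cases are immediate: for $n = 0$ there is one empty labeling and $\sum_{i=0}^{d}\binom{0}{i} = 1$, and for $d = 0$ no single point is shattered, so $\cH$ has exactly one labeling on every set, matching $\sum_{i=0}^{0}\binom{n}{i} = 1$. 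Finally $\sum_{i=0}^{d}\binom{n}{i} \le (n+1)^d = O(n^d) = O(|S|^d)$ for fixed $d$, which is the stated form.

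I do not anticipate a genuine obstacle: the only delicate point is defining $\cH_2$ precisely and verifying cleanly that shattering by $\cH_2$ compels $\cH$ to shatter one extra coordinate; the remainder is bookkeeping with binomial coefficients. (An alternative route is the down-shifting / compression argument, repeatedly flipping a $1$ to a $0$ in one coordinate whenever this preserves both the size of the family and an appropriate "trace" property, reducing to a downward-closed family whose size is then directly counted; but the induction above is the shortest self-contained proof and is what I would present.)
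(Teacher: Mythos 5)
The paper does not prove this lemma at all---it is stated and cited as the classical Sauer--Shelah--Perles bound and then used as a black box in the proof of Proposition~\ref{prop:finite-RVC}---so there is no in-paper argument to compare against. Your proof is the standard double induction on $(|S|,d)$ and it is correct: the decomposition $\pi(\cH,S)=|\cH_1|+|\cH_2|$ via restriction to $S'=S\setminus\{x_n\}$, the observation that shattering by $\cH_2$ forces $\cH$ to shatter one extra coordinate (giving $\mathrm{VC}(\cH_2)\le d-1$), and the Pascal-identity bookkeeping are all exactly the textbook argument, and it delivers the sharp bound $\sum_{i=0}^d\binom{|S|}{i}$, which is stronger than the $O(|S|^d)$ the paper actually needs.
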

Proposition \ref{prop:finite-RVC} simply follows from using Sauer-Shelah-Perles to bound the total number of permissible patterns across perturbation sets of a sample in the loss space.
\begin{proof}[Proof of Proposition \ref{prop:finite-RVC}] Let $m \in \mathbb{N}$ and assume there exists a sample $S=(x_1,y_1), \ldots, (x_m,y_m)$ that is shattered by $\mathcal{L}^U_{\cH}$. We will show $m \leq O(d\log(kd))$. With this in mind, let $T=\bigcup_{i=1}^m U(x_i)$ denote the set of at most $km$ points corresponding to the robustness regions of our sample. The key observation is the following (essentially trivial) claim:
\begin{claim}
Any two $g_{U}^\ell,h_{U}^\ell \in \mathcal{L}^U_{\cH}$ that give distinct labelings of $S$ correspond to $g,h \in \cH$ with distinct labelings of $T$.
\end{claim}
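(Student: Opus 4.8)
The plan is to prove the claim stated just before the excerpt ends, and then assemble it into the full bound on $\mathrm{VC}(\mathcal{L}^U_{\cH})$. The claim asserts that if $g^\ell_U, h^\ell_U \in \mathcal{L}^U_{\cH}$ induce distinct labelings on the sample $S = (x_1,y_1),\dots,(x_m,y_m)$, then the underlying hypotheses $g,h \in \cH$ induce distinct labelings on $T = \bigcup_{i=1}^m U(x_i)$. First I would prove the contrapositive: suppose $g$ and $h$ agree on all of $T$. Then for each $i$ and each $x' \in U(x_i) \subseteq T$ we have $g(x') = h(x')$, so in particular $g(x') = y_i$ for all $x' \in U(x_i)$ if and only if $h(x') = y_i$ for all $x' \in U(x_i)$; by definition of the robust loss this says $g^\ell_U(x_i,y_i) = h^\ell_U(x_i,y_i)$ for every $i$, i.e.\ $g^\ell_U$ and $h^\ell_U$ agree on $S$. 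This is the (essentially trivial) content of the claim.

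Given the claim, the rest is a counting argument via Sauer--Shelah--Perles. If $S$ of size $m$ is shattered by $\mathcal{L}^U_{\cH}$, then $\mathcal{L}^U_{\cH}$ realizes all $2^m$ labelings of $S$; by the claim these $2^m$ labelings are images (under the map ``restrict the robust-loss function to $S$'') of $2^m$ \emph{distinct} labelings of $T$ by hypotheses in $\cH$. Hence $\cH$ realizes at least $2^m$ distinct labelings on the set $T$, which has $|T| \le km$ points. But Sauer--Shelah--Perles bounds the number of labelings $\cH$ can realize on $T$ by $O(|T|^d) = O((km)^d)$. Therefore $2^m \le O((km)^d)$, and taking logarithms gives $m \le d\log(km) + O(d) = d\log k + d\log m + O(d)$. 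Solving this self-referential inequality (e.g.\ observing that $m \le 2d\log(2km)$ or substituting a guessed bound $m = O(d\log(kd))$ and checking consistency) yields $m = O(d\log(kd))$, which is the desired bound on $\mathrm{VC}(\mathcal{L}^U_{\cH})$.

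There is essentially no hard step here — the claim is immediate from unwinding the definition of robust loss, and the counting is the standard ``take a union bound over perturbation points then apply Sauer--Shelah'' trick. The only place requiring a small amount of care is the final algebra: the inequality $2^m \lesssim (km)^d$ is implicit in $m$ on both sides, so one must argue that it forces $m = O(d\log(kd))$ rather than, say, $m = O(d\log k)$ alone; this is routine (if $m \ge C d \log(kd)$ for large enough $C$ then $2^m$ already exceeds $(km)^d$, a contradiction). One should also note the mild subtlety that distinct labelings of $S$ pull back to distinct labelings of $T$ but possibly many $\cH$-hypotheses map to the same labeling of $T$ — this is fine, since we only need a lower bound of $2^m$ on the \emph{number of distinct $T$-labelings}, which the claim delivers by contraposition (distinct $S$-labelings cannot come from equal $T$-labelings). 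I would present the proof of the claim first (two lines), then the counting paragraph, then close with the algebra.
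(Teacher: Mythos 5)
Your proof of the claim is correct and is exactly the paper's argument: the paper dismisses it as ``immediate from definition'' because $h^\ell_U$ on $S$ is determined by $h$'s labels on $T$, which is precisely the contrapositive you spell out. The subsequent Sauer--Shelah--Perles counting and the resolution of the implicit inequality $2^m \le O((km)^d)$ also match the paper's proof of Proposition~\ref{prop:finite-RVC}.
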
 
By robust shattering, there exist $2^m$ distinct labelings of $S$ by $\mathcal{L}^U_{\cH}$, so the above claim implies $\cH$ must have $2^m$ distinct labelings of $T$. However the latter has at most $O((km)^d)$ labelings by VC dimension, so
\[
2^m \leq O((km)^d) \Rightarrow m \leq O(d\log(dk))
\]
by standard manipulations. Finally, we note the claim is immediate from definition, since the behavior of a function $h_U^\ell \in \mathcal{L}^U_{\cH}$ on $S$ depends only on the labels of its corresponding hypothesis $h \in \cH$ on $T$ by definition. 
\end{proof}

\section{Proof of Theorem \ref{thm:upper_bound_general}}\label{sec:proof_extension}

We begin by defining $v_{ball}$, which is the adversarial VC dimension when the robustness regions are all balls of a fixed radius. We start by precisely defining these robustness regions. 

\begin{defn}
Let $U^r$ be the set of robustness regions defined by $\{U_x^r = B(x, r)\}$, where $B(x, r)$ denotes the closed ball of $\ell_2$-radius $r$ centered at $x$. 
\end{defn}

We now define the adversarial VC dimension of a set of classifiers $\cH$ for a fixed set of regions, $U^r$.

\begin{defn}
Let $\cH$ be a set of classifier. Then the adversarial VC dimension of $\cH$ with respect to $U^r$ is the maximum integer $v$, for which there exist $v$ labeled points, $(x_1, y_1), \dots, (x_r, y_r)$ so that for any subset $S \subset \{(x_1, y_1) ,\dots, (x_r, y_r)\}$, there exists $h_S \in \cH$ with $$\ell(h_S, (x_i, y_i) = \begin{cases}0 & i \in S \\ 1 & i \notin S\end{cases}.$$ We denote this by $v_{ball}^r$.
\end{defn}

Finally, we define $v_{ball}$ as the maximum value of $v_{ball}^r$ over all $r > 0$. Note that this quantity has been well studied -- for example \cite{Cullina18} shows that for linear classifiers, $v_{ball} = O(d)$. 

\paragraph{Proving Theorem \ref{thm:upper_bound_general}} We now turn our attention towards the proof. The key observation is that the main steps from the proof of Theorem \ref{thm:upper_bound} \textit{perfectly} carry over. In particular, Lemma \ref{lem:r_works} exactly holds in this setting, and the argument given in the proof of Theorem \ref{thm:upper_bound} also holds provided that an appropriate choice of $V$ exists. The only issue arises from Lemma \ref{lem:v_construct}, which requires that $\cH$ be regular. To remedy this, we now state and prove a different version of this lemma that uses a union of balls (of fixed radius) for $V_x$ rather than a finite set of points. 

\begin{lem}\label{lem:v_construct_new}
Let $\cH$ by an arbitrary hypothesis class. For all $r \in [\frac{\epsilon\delta\gamma}{7}, \gamma]$, let $\alpha$, $U^r$ and $U^{r-\alpha}$ be as described in the proof of Theorem \ref{thm:upper_bound}. Then there exists a set of robustness regions $V^r = \{V_x^r: x \in \reals^d\}$ satisfying the following two properties.  
\begin{enumerate}
	\item $V_x^r$ is a union of $O\left(\left(\frac{D}{\epsilon\delta\gamma}\right)^d\right)$ balls of radius , where $D$ denotes the maximum diameter of $U_x$. 
	\item Let $\alpha = \frac{\epsilon\delta\gamma}{7}$. For all labeled points $(x,y)$ and for all classifiers $h \in \cH$, $$\ell_{U^{r - \alpha}}(h, (x,y)) \leq \ell_{V^r}(h, (x,y)) \leq \ell_{U^r}(h, (x,y)).$$
\end{enumerate}
\end{lem}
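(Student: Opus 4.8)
\textbf{Proof proposal for Lemma \ref{lem:v_construct_new}.}

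The plan is to mirror the proof of Lemma \ref{lem:v_construct}, but replace the finite point-set cover of $U_x^r$ by a \emph{union of balls} of radius $\alpha/2$ centered at those same cover points, so that the construction no longer relies on the regularity of $\cH$. Concretely, fix $x \in \reals^d$ and recall that $U_x^r$ is a union of radius-$r$ balls with centers in $U_x$, hence compact (diameter at most $D + 2r \le 3D$, say). Take a minimal $(\alpha/2)$-ball cover of $U_x^r$: let $C_x$ be the set of centers, of size $|C_x| = O\!\left((D/\alpha)^d\right) = O\!\left((D/(\epsilon\delta\gamma))^d\right)$ by a standard volumetric/packing argument (the cardinality bound is exactly as in Lemma \ref{lem:v_construct}). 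Now define
\[
V_x^r \coloneqq \bigcup_{c \in C_x} B\!\left(c, \tfrac{\alpha}{2}\right).
\]
This immediately gives the first property: $V_x^r$ is a union of $O\!\left((D/(\epsilon\delta\gamma))^d\right)$ balls of radius $\alpha/2$ (I would also flag that the statement in the lemma has a typo — "balls of radius ," — and should read "balls of radius $\alpha/2$").

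For the second property I would argue the two inclusions of losses separately, exactly as in Lemma \ref{lem:v_construct} but now \emph{without} invoking regularity. For the upper bound $\ell_{V^r}(h,(x,y)) \le \ell_{U^r}(h,(x,y))$: I claim $V_x^r \subseteq U_x^r$... wait — here is the one genuine subtlety, and the place I expect to spend the most care. The cover balls $B(c,\alpha/2)$ need not lie inside $U_x^r$ (their centers do, but they may poke out). In Lemma \ref{lem:v_construct} this was handled by intersecting with $U_x^r$, i.e. taking $V_x = C_x \cap U_x^r$; a finite point set can be intersected with $U_x^r$ freely, but a union of balls cannot, since we need $V_x^r$ to remain a union of balls for the VC bound of Theorem \ref{thm:upper_bound_general} to go through via the ball-adversarial-VC machinery. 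So instead I would choose the cover radius slightly more carefully: build the cover using centers drawn from $U_x^{r-\alpha}$ (possible since $\{B(c,\alpha/2): c \in U_x^{r-\alpha}\}$ already covers $U_x^{r-\alpha}$ and, enlarging radii, covers $U_x^r$ after we pass to a net — the key point is every point of $U_x^r$ is within $\alpha$ of $U_x^{r-\alpha}$... ) Hmm. The cleanest fix: take $C_x$ to be an $(\alpha/2)$-net of $U_x^{r - \alpha/2}$; then $\bigcup_{c} B(c,\alpha/2) \supseteq U_x^{r-\alpha/2} \supseteq U_x^{r-\alpha}$, while by the triangle inequality each $B(c,\alpha/2) \subseteq U_x^{r-\alpha/2 + \alpha/2} = U_x^r$, so $V_x^r \subseteq U_x^r$ as needed. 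With this choice both inclusions are now routine: if $\ell_{V^r}(h,(x,y))=1$ there is $x' \in V_x^r \subseteq U_x^r$ with $h(x')\ne y$, giving $\ell_{U^r}(h,(x,y))=1$; and if $\ell_{U^{r-\alpha}}(h,(x,y))=1$ there is $x' \in U_x^{r-\alpha} \subseteq \bigcup_c B(c,\alpha/2) = V_x^r$ with $h(x')\ne y$ (this witness is itself in $V_x^r$, so we do not even need a regularity argument to produce a nearby bad point — this is precisely why the regularity hypothesis can be dropped), giving $\ell_{V^r}(h,(x,y))=1$.

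Assembling: the cardinality of the net of $U_x^{r-\alpha/2}$ is still $O\!\left((D/\alpha)^d\right)$ since $U_x^{r-\alpha/2}$ has diameter $O(D)$, so property 1 holds with radius $\alpha/2$; and property 2 holds by the two inclusions above. The main obstacle, as noted, is the bookkeeping around which inflated/deflated region to net so that $V_x^r$ is sandwiched between $U_x^{r-\alpha}$ and $U_x^r$ \emph{while remaining a union of fixed-radius balls}; once the radius is chosen as $\alpha/2$ and the net is taken on $U_x^{r-\alpha/2}$, everything else is a direct transcription of Lemma \ref{lem:v_construct}, and Theorem \ref{thm:upper_bound_general} then follows by repeating the proof of Theorem \ref{thm:upper_bound} verbatim with Lemma \ref{lem:v_construct_new} in place of Lemma \ref{lem:v_construct} and the robust-VC bound for unions of $k$ balls (which reduces to $v_{ball}$ via the analogue of Proposition \ref{prop:finite-RVC}).
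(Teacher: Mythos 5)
Your proof is correct and follows essentially the same route as the paper: cover a slightly deflated region with radius-$\alpha/2$ balls so that the resulting union $V_x^r$ is sandwiched as $U_x^{r-\alpha} \subseteq V_x^r \subseteq U_x^r$, which gives both properties immediately and without any regularity assumption. The only (immaterial) difference is bookkeeping: the paper takes a minimal cover of $U_x^{r-\alpha}$ and uses the fact that each cover ball intersects $U_x^{r-\alpha}$ to conclude $V_x^r \subseteq U_x^r$, whereas you net $U_x^{r-\alpha/2}$ with centers inside it — both yield the same sandwich and the same cardinality bound.
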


\begin{proof}
Since $U_x^{r-\alpha}$ has diameter at most $D$, it follows that it can be covered with $O\left(\left(\frac{D}{\epsilon\delta\gamma}\right)^d\right)$ balls of radius $\frac{\alpha}{2}$. We let $V_x^r$ be any such cover that is minimal (meaning that (1.) is satisfied), meaning that each ball intersects $U_x^{r-\alpha}$. It follows that for all $x$, $U_x^{r-\alpha} \subseteq V_x^r \subseteq U_x^r$, which immediately implies (2.) and completes the proof. 
\end{proof}

Finally, to prove Theorem \ref{thm:upper_bound_general}, we note that the proof of Theorem \ref{thm:upper_bound} essentially works. The only differences are that instead of bounding the robust VC dimension of $\cH$ with respect to $V_X$ in terms of $v$, we must use $v_{ball}$ as we are now considering unions of balls rather than points. As a detail, note that we are using the following minor modification of Proposition \ref{prop:finite-RVC} to bound the robust VC dimension of unions of balls using the robust VC dimension for balls. 
\begin{prop}
Let $(\cX,\cH)$ be a hypothesis class whose robust loss class with respect to $r$-balls has VC dimension $v_{\text{ball}}^r$. Then the loss class of $(\cX,\cH)$ with respect to perturbations that are a union of at most $k$ $r$-balls has VC dimension at most $O(v_{\text{ball}}^r\log(v_{\text{ball}}^rk))$.
\end{prop}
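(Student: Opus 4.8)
The statement to prove is the final proposition: that the robust loss class for perturbations that are unions of at most $k$ balls of radius $r$ has VC dimension at most $O(v_{\text{ball}}^r \log(v_{\text{ball}}^r k))$. This is a direct analog of Proposition \ref{prop:finite-RVC}, with ``$k$ points'' replaced by ``$k$ balls of radius $r$'', and ``VC dimension of the original class'' replaced by ``adversarial VC dimension with respect to $r$-balls''. The plan is to mimic the Sauer--Shelah counting argument used there, but carried out one level up in the hierarchy of loss classes.

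\paragraph{Proof proposal.} The plan is as follows. Suppose a sample $S = (x_1,y_1),\dots,(x_m,y_m)$ is shattered by the robust loss class for perturbations $W_x = \bigcup_{j=1}^{k} B(c_{x,j}, r)$, each a union of at most $k$ balls of radius $r$. We want to conclude $m \leq O(v_{\text{ball}}^r \log(v_{\text{ball}}^r k))$. For each sample point $x_i$, let $c_{i,1},\dots,c_{i,k}$ denote the centers of the $\leq k$ balls whose union is $W_{x_i}$; form the ``meta-sample'' $T = \{(c_{i,j}, y_i) : 1 \le i \le m,\ 1 \le j \le k\}$ of at most $km$ labeled points. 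The key observation, exactly as in the proof of Proposition \ref{prop:finite-RVC}, is that the value of $h$'s union-of-balls robust loss on $(x_i,y_i)$ is determined by the values of $h$'s $r$-ball robust loss on the $k$ labeled points $(c_{i,1},y_i),\dots,(c_{i,k},y_i)$: indeed $h$ is robust at $x_i$ with respect to $W_{x_i}$ iff it is robust at \emph{every} $c_{i,j}$ with respect to $B(c_{i,j},r)$. Hence any two hypotheses giving distinct union-of-balls-robust labelings of $S$ must give distinct $r$-ball-robust labelings of $T$. Since $S$ is shattered there are $2^m$ such labelings, so the $r$-ball robust loss class must realize at least $2^m$ distinct labelings on $T$.

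Now apply the Sauer--Shelah--Perles lemma to the $r$-ball robust loss class, which by hypothesis has VC dimension $v_{\text{ball}}^r$: on a set of $|T| \le km$ points it realizes at most $O((km)^{v_{\text{ball}}^r})$ distinct labelings. Combining, $2^m \le O((km)^{v_{\text{ball}}^r})$, which by the standard manipulation (taking logs, using $m \le 2^{m/2}$ type bounds or the usual $x \le a\log x \Rightarrow x = O(a\log a)$ inequality) yields $m \le O(v_{\text{ball}}^r \log(v_{\text{ball}}^r k))$, as desired.

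\paragraph{Main obstacle.} There is essentially no hard step; the entire content is the ``determined by'' observation linking the two loss classes, which is immediate from the definition of robust loss over a union of sets. The one point requiring a touch of care is the bookkeeping when a region $W_x$ is a union of \emph{fewer} than $k$ balls, or when different sample points reuse the same center --- these only shrink $|T|$, so the bound $|T| \le km$ is safe, and one simply pads with repeated centers if a uniform count of $k$ is convenient. Another mild subtlety: the Sauer--Shelah lemma as stated earlier gives a bound on labelings of a finite set in terms of the VC dimension of the class over that set, so one should note that the $r$-ball robust loss class $\mathcal{L}^{U^r}_{\cH}$ has VC dimension $v_{\text{ball}}^r$ by definition of $v_{\text{ball}}^r$ (as the maximum shattered set size), and apply the lemma to $\mathcal{L}^{U^r}_{\cH}$ on $T$. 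With these details in place the proof is a two-line counting argument identical in structure to that of Proposition \ref{prop:finite-RVC}.
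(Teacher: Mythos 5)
Your proposal is correct and follows the same route as the paper's proof: pass to the extended sample $T$ of labeled ball centers, note that distinct labelings of $S$ by the union-of-balls loss class induce distinct labelings of $T$ by the $r$-ball loss class, and apply Sauer--Shelah--Perles to get $2^m \le O((km)^{v_{\text{ball}}^r})$. The bookkeeping caveats you flag (fewer than $k$ balls, repeated centers, applying Sauer--Shelah to the loss class itself) are handled implicitly in the paper and do not change the argument.
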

\begin{proof}
The proof is largely the same as \ref{prop:finite-RVC}. Denote the original perturbation family as $U$, and the $k$-union perturbation family by $U^k$. Given a sample $S=(x_1,y_1),\ldots,(x_m,y_m)$, let $C_i$ denote the centers of the at most $k$ balls appearing in the perturbation set of $x_i$. It is enough to observe that any two distinct labelings of $S=(x_1,y_1),\ldots,(x_m,y_m)$ by $\mathcal{L}^{U^k}_{\cH}$ correspond to distinct labelings of the extended sample $T=\bigcup_{i=1}^m (C_i,y_i)$ with respect to $\mathcal{L}^{U}_{\cH}$, where $(C_i,y_i)$ denotes the sample $\bigcup_{c \in C_i} (c,y_1)$. The bound then follows from the same double counting argument as in Proposition \ref{prop:finite-RVC}.
\end{proof}

\end{document}